\PassOptionsToPackage{table}{xcolor}
\documentclass{article}
\usepackage{iclr2027_conference,times}

\usepackage{amsmath,amsfonts,bm, amsthm}
\usepackage{thmtools}
\usepackage{thm-restate}
\newtheorem{theorem}{Theorem}[section]

\declaretheorem[sibling=theorem,name=Corollary]{corollary}

\def\eqref#1{equation~\ref{#1}}

\def\1{\bm{1}}

\def\vx{{\bm{x}}}

\def\mA{{\bm{A}}}

\def\mE{{\bm{E}}}

\def\mJ{{\bm{J}}}

\def\mW{{\bm{W}}}
\def\mX{{\bm{X}}}

\DeclareMathAlphabet{\mathsfit}{\encodingdefault}{\sfdefault}{m}{sl}
\SetMathAlphabet{\mathsfit}{bold}{\encodingdefault}{\sfdefault}{bx}{n}

\usepackage{hyperref}
\usepackage{url}
\usepackage[table]{xcolor}
\usepackage{booktabs}

\usepackage{cleveref}
\usepackage{thmtools}
\usepackage{thm-restate}
\usepackage{multicol}

\usepackage{multirow}
\usepackage{graphicx}
\usepackage{amsmath}
\usepackage{wrapfig}
\usepackage{subcaption}

\newcommand{\model}{SGT}
\newcommand{\modelext}{S-Graph Transformer}
\newcommand{\ie}{i.e., }
\newcommand{\eg}{e.g., }

\def\cayley{\mathrm{cay}}

\title{Stable Transformers for Graph Generation}

\author{Luca Miglior\thanks{Equal contribution.}\,\,\,\thanks{Correspondence: luca.miglior@phd.unipi.it, alessio.gravina@di.unipi.it.} \quad
Alessio Gravina\footnotemark[1] \quad
Davide Bacciu \\
Department of Computer Science, University of Pisa, Italy}

\iclrfinalcopy
\begin{document}

\maketitle

\begin{abstract}
Graph generative models increasingly rely on Graph Transformers (GT) to capture complex dependencies among nodes and edges. While deeper architectures should provide greater expressive capacity and a broader receptive field, their effectiveness can decline with depth: repeated self-attention progressively contracts node representations, impeding information flow and gradient propagation. We analyse this phenomenon from a dynamical systems perspective, focusing on how the denoiser's spectral dynamics affect graph generation. We show that standard GT denoisers become increasingly dissipative as depth grows, leading to vanishing gradients and representation collapse. 
To isolate the effect of these dynamics, we construct  a permutation-equivariant GT with inherently stable, non-dissipative transport. We also introduce a damping mechanism that continuously interpolates between non-dissipative and increasingly contractive regimes, enabling a direct assessment of how dissipation influences generation. Experiments on synthetic and molecular graph generation benchmarks show that the gap between these regimes widens with depth: non-dissipative dynamics preserve representation diversity and gradient flow, sustaining strong generative performance, whereas greater contraction progressively impairs it. These findings identify the denoiser's dynamical regime as a key design factor for deep graph generative models.
\end{abstract}

\section{Introduction}\label{sec:introduction}
Learning to generate graphs is central to a wide range of problems in which both the entities and their relations must be modeled jointly, such as in biology and life science \citep{Li2026}. Recent graph generative models have advanced substantially, with diffusion- and flow-matching-based approaches emerging as particularly effective frameworks \citep{vignac2023digress, qin2025defog, liu2024graphdiffusiontransformersmulticonditional, jo2022scorebasedgenerativemodelinggraphs, hou2024improvingmoleculargraphgeneration, eijkelboom2025variationalflowmatchinggraph, jang2024graphgenerationk2trees, carballo-castro2026generating, zhao2024pard, luo2026simgfm}. A common trait of these methods is their increasing reliance on expressive Graph Transformer (GT) models \citep{graphtransformer,dwivedi2021generalization}, making the architecture's properties crucial to generation. Greater depth is especially appealing because it increases expressive capacity, enriches representations, and allows information to propagate farther across the graph, potentially capturing longer-range dependencies.
However, a wider receptive field does not necessarily ensure effective long-range communication.  
Although stacking layers progressively enlarges the receptive field, the influence of distant nodes can rapidly fade because of over-smoothing \citep{cai2020note, oono2020graph, rusch2023survey} and over-squashing \citep{alon2021oversquashing, topping2022understanding, diGiovanniOversquashing, mishayev2025shortrange}, both closely linked to vanishing gradients \citep{gravina_gnn_ssm}.
Similar effects have been observed in Transformers \citep{vaswani2023attentionneed}, where repeated self-attention can progressively reduce representation diversity and drive token representations toward low-rank or collapsed states \citep{attention_is_not_all_you_need, transformer_rank_collapse}.
In graph generation, tokens represent nodes whose distinct identities are essential to reconstructing graph structure. Increasing depth may therefore weaken the very information that additional layers are meant to propagate. This leads to our central question: \emph{to what extent is the behavior of a deep Graph Transformer determined by the dynamical regime underpinning its layers?}

We address this question from a dynamical-systems perspective and show that standard self-attention contracts representations along the node axis. Repeated composition consequently attenuates the information encoded in differences between nodes. 
To the best of our knowledge, this is the first work to directly connect the dissipative dynamics of Graph Transformer denoisers to graph generation quality.
Our analysis identifies stable and non-dissipative node dynamics as a principled design criterion for graph generative models which preserve information across layers, allowing graph generative models to exploit depth more effectively. \begin{wrapfigure}{r}{0.6\linewidth}
    \centering
    \includegraphics[width=\linewidth]{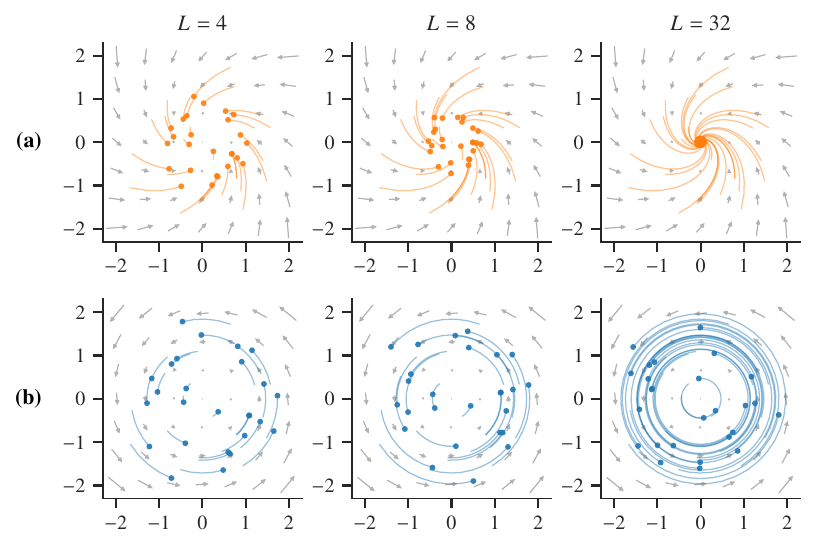}
    \caption{\small Illustration of node-state trajectories across depth $L$. \textbf{(a)} Under a standard Graph Transformer denoiser, node states spiral into a single point and node-specific information is lost. \textbf{(b)} Under non-dissipative orthogonal transport (ours), node states rotate while keeping their norms and remain distinct.}
    \label{fig:teaser}
\end{wrapfigure}
Building on this insight, we replace the contractive attention map with a Cayley-based orthogonal node-mixing operator. Acting directly on the node dimension, the operator is guaranteed to be orthogonal by construction and prevents systematic contraction across layers (see \Cref{fig:teaser}). This yields a simple drop-in replacement for standard self-attention in existing GT denoisers, leaving the surrounding generative framework and procedure unchanged. Our approach also interpolates continuously between non-dissipative and increasingly contractive regimes, enabling us to isolate the role of vanishing gradients in deep GTs. By varying contraction strength, we track how information and node representations evolve with depth, while the pure non-dissipative regime allows us to assess whether preserving signal propagation sustains long-range interactions. Our experiments show that these differences become increasingly pronounced as depth grows: non-dissipative dynamics retain strong generative performance, whereas contractive dynamics progressively degrade it.

\textbf{Contributions.}
Our main contributions are threefold. \textbf{(i)} In \Cref{sec:theory}, we provide, to the best of our knowledge, the first study connecting the dissipative dynamics of Graph Transformer denoisers to graph generation quality. We show that standard self-attention progressively contracts node representations with depth, leading to the loss of node-specific information and vanishing gradients. \textbf{(ii)} In \Cref{sec:method}, we introduce a simple drop-in modification of the Graph Transformer node-mixing operator based on the Cayley transform. The resulting transport is non-dissipative by construction, preserves permutation equivariance, and can be integrated into existing GT-based generative models without modifying their training objective or sampling procedure. \textbf{(iii)} Finally, in \Cref{sec:experiments}, we systematically control the amount of dissipation in discrete flow-matching graph generation and establish its effect on generation quality. The advantage of non-dissipative dynamics grows with depth: contractive denoisers degrade and eventually collapse as layers are added, while non-dissipative ones continue to benefit from them.

 \section{Background and Related Work}
\label{sec:background_related_work}

\subsection{Discrete Flow Matching for Graph Generation}
Score- and diffusion-based models have recently become prominent approaches to graph generation through iterative denoising of node and edge states. 
Such methods differ in how this generative process is instantiated: GDSS~\citep{jo2022scorebasedgenerativemodelinggraphs} evolves continuous graph representations through score-based stochastic dynamics, whereas DiGress~\citep{vignac2023digress} performs diffusion directly over categorical node and edge attributes. Flow Matching (FM)~\citep{lipman2023flowmatchinggenerativemodeling} instead learns a transport dynamics along a prescribed probability path, with Discrete Flow Matching~\citep{gat2024discreteflowmatching,campbell2024generativeflowsdiscretestatespaces} extending this principle to discrete state spaces. Despite these differences, recent graph generative models commonly rely on GT to parameterize the evolution of node and edge states, making multi-head attention a shared architectural component across otherwise distinct generative formulations.
Within this landscape, we adopt DeFoG~\citep{qin2025defog} because of its robustness, efficiency and compatibility with discrete graph structures. 

Let $\mathcal{G}_t=(\mathcal{V},\mX_t,\mE_t)$ denote the graph state at time $t\in[0,1]$, where $\mathcal{V}$ contains $N$ nodes and $\mX_t,\mE_t$ represent categorical node and edge states. Following DeFoG, we define a factorized conditional path $p_{t|1}(\mathcal{G}_t\mid\mathcal{G}_1)$ that describes intermediate states conditioned on a clean target graph $\mathcal{G}_1$:
\begin{equation}
    p_{t|1}(\mathcal{G}_t\mid\mathcal{G}_1)
    =
    \prod_{i=1}^{N}
    p^X_{t|1}(x_t^i\mid x_1^i)
    \prod_{1\leq i<j\leq N}
    p^E_{t|1}(e_t^{ij}\mid e_1^{ij}),
\end{equation}
where each factor interpolates between the corresponding source distribution and the clean state:
\begin{equation}
    p^Z_{t|1}(z_t\mid z_1)
    = (1-t)p_0^Z(z_t)+t\,\delta_{z_1}(z_t),
    \qquad Z\in\{X,E\}.
\end{equation}
Here, $\delta_{z_1}$ denotes a point mass at $z_1$. Averaging the conditional path over $\mathcal{G}_1\sim p_{\mathrm{data}}$ yields a marginal path $p_t$ that connects source noise at $t=0$ to the data distribution at $t=1$.

Generation follows this path through a continuous-time Markov chain (CTMC), with transition rates governing jumps between categorical states. For each node or edge variable,
\begin{equation}
    \Pr(z_{t+h}=z'\mid\mathcal{G}_t)
    = \delta_{z_t}(z')
    + h R_t^\theta(z_t,z';\mathcal{G}_t)+o(h).
\end{equation}
The rates $R_t^\theta$ are computed by averaging the conditional transition rates over the predicted clean-state marginal
$p^\theta_{1|t}(z_1\mid\mathcal{G}_t)$.
The denoiser is trained with standard cross-entropy to predict these marginals for all nodes and edges, conditioned on the  current graph. We analyse its architecture as a dynamical system across network depth, while retaining DeFoG’s training objective and CTMC generation framework as the foundation of our contribution.

Literature methods, such as~\cite{vignac2023digress,huang2023conditionaldiffusionbaseddiscrete,jo2024graphgenerationdiffusionmixture,qin2025defog,eijkelboom2025variationalflowmatchinggraph,luo2026simgfm}, typically parametrize the denoising network as a GT with standard multi-head self-attention \citep{vaswani2023attentionneed}. In each layer, head $h$ propagates information between nodes through an attention matrix $\mA^{(\ell,h)}\in\mathbb{R}^{N\times N}$.
For node representations $\mX^{(\ell)}\in\mathbb{R}^{N\times d}$, the layer computes
\begin{equation}\label{eq:graph_transformer}
    \mX^{(\ell+1)} = \text{FF}(\text{LayerNorm}(
    \mX^{(\ell)} + \bigoplus
    \mA^{(\ell,h)}
    \mX^{(\ell)}
    \mW_V^{(\ell,h)})),
\end{equation}
where $\mW_V^{(\ell,h)}\in\mathbb{R}^{d\times d}$ is the value projection matrix for layer $\ell$ and head $h$, and $\bigoplus$ denotes head aggregation (\eg concatenation). An output projection combines the head outputs before the residual connections, normalization, and feed-forward blocks. Softmax normalization makes every attention matrix row-stochastic:
$\mA^{(\ell,h)}\geq 0$ entrywise and
$\mA^{(\ell,h)}\mathbf{1}=\mathbf{1}$.
Therefore, each head forms convex combinations of node values. This averaging behaviour motivates our analysis of how repeated attention transforms representations across network depth.
\subsection{Effective Information Propagation}\label{sec:effective_information}
Several studies \citep{HaberRuthotto2017, neuralODE,chang2018antisymmetricrnn, GDE, gravina_adgn, gravina_gnn_ssm} analyse information propagation by interpreting neural networks as discretized dynamical systems governed by differential equations. 
From this perspective, network layers correspond to successive steps of a numerical scheme for solving the differential equation: each layer transformation represents one step in the evolution of the underlying continuous dynamical system.
Information preservation and propagation are governed by the system Jacobian \citep{Ascher1998, HaberRuthotto2017}:
\begin{equation}
    \mJ = \prod_{\ell=1}^L \frac{\partial \vx^{(\ell)}}{\partial \vx^{(\ell-1)}} = \prod_{\ell=1}^L \mJ_\ell
\end{equation}
where $L$ is the number of neural layers and $\vx^{(\ell)}$ is the state vector at layer $\ell$.
The singular values of the Jacobian quantify directional changes in representation space. Specifically, values below one attenuate perturbations and promote vanishing gradients, whereas values above one amplify them and promote exploding gradients. Across many layers, even mild contraction or expansion can therefore produce substantial signal attenuation or amplification.

We distinguish three regimes. Dynamics are \textit{dissipative} when representations progressively contract and information decays through composition, typically because eigenvalues lie strictly inside the unit circle; smaller magnitude implies faster modes decay.  Dynamics are \textit{unstable} when eigenvalues lie outside the unit circle, amplifying perturbations across layers. Stability therefore requires eigenvalues to remain within the unit circle, but does not by itself guarantee information preservation: a stable system may still be strongly dissipative and rapidly forget its input as depth increases. Finally, \textit{non-dissipative} dynamics occupies the boundary regime in which the relevant eigenvalues lie on the unit circle, preventing asymptotic information decay.

Vanishing gradients have been extensively studied in sequence modeling \citep{bengio1994learning, hochreiter1997long, pascanu2013difficulty}, motivating architectures that preserve information over long sequences \citep{arjovsky2016unitary, henaff2016, orvieto2023resurrecting, gu2021, gu2023mamba}. More recently, analogous principles have been applied to graph learning \citep{gravina_adgn, gravina_gnn_ssm}, where contractive spectral structures can cause severe gradient vanishing and information loss. These findings show that effective long-range modeling depends on preserving signal strength through non-dissipative dynamics. However, whether similar spectral phenomena arise in graph generative models, and how they affect the denoising dynamics with increasing depth, remains largely unexplored.

 \subsection{Deep Graph Transformers Suffer from Rank-Collapse}\label{sec:theory}
In this section, we argue why the class of GTs commonly used as denoisers in graph generative models are prone to vanishing gradients and relate this behavior to the spectral contraction of their dynamics. 

Recent works \citep{attention_is_not_all_you_need, transformer_rank_collapse, saada2025mind} have studied signal propagation in Transformers through \emph{rank collapse}, where token representations become increasingly aligned with depth. In particular, \citet{attention_is_not_all_you_need} showed that pure self-attention networks converge to rank-one representations. This phenomenon affects not only forward dynamics but also gradient propagation. \citet{transformer_rank_collapse} showed that increasing token alignment causes vanishing gradients in the query and key parameters. More recently, \citet{saada2025mind} provided a spectral characterization of this behavior, showing that the spectral gap induced by softmax attention promotes rank collapse and degrades gradient propagation. 
These findings link the mechanisms driving representation collapse in deep Transformers to their ability to propagate informative signals and gradients effectively.
\begin{figure}[h]
    \centering

    \includegraphics[width=\textwidth]{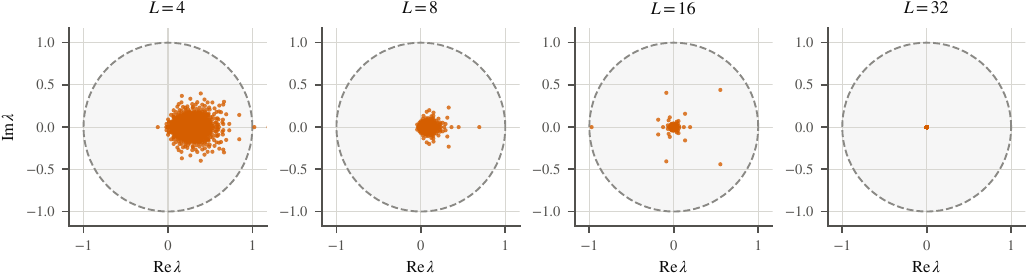}
\caption{Jacobian spectra of the GT by \cite{vignac2023digress} for different number of layers $L$.}
    \label{fig:baseline_spectra}
\end{figure}
In Graph Transformers, where tokens represent nodes \citep{graphtransformer,dwivedi2021generalization}, this degeneration directly impairs information propagation across the graph. As node representations progressively collapse with increasing depth, the model becomes less sensitive to node-specific information, making the dynamics increasingly dissipative. In this scenario, \citet{gravina_gnn_ssm} show that the contractive nature of graph propagation can jointly cause feature collapse and vanishing gradients, limiting both the influence of distant nodes and modeling of node dependencies. Moreover, \citet{zhao2023more} show that deeper GTs face an attention-capacity bottleneck that increasingly restricts their ability to identify and propagate information from relevant graph substructures. Together, these results indicate that greater depth can induce increasingly contractive dynamics that hinder effective information propagation in GTs.

We empirically examine this behavior through the Jacobian spectrum in \Cref{fig:baseline_spectra} for the GT
introduced by \citet{vignac2023digress}, which is representative of the Transformer-based denoisers used in recent graph generative models, \eg \cite{qin2025defog,eijkelboom2025variationalflowmatchinggraph, luo2026simgfm}.
Shallow models retain Jacobian eigenvalues of substantial magnitude across layers, whereas increasing depth progressively drives them toward zero, indicating stronger contraction and vanishing gradients. 
As a result, deep GT denoisers struggle to propagate gradients and preserving node-specific information.

 \section{Controlling Graph Transformer Dynamics}\label{sec:method}

The preceding analysis characterized deep Transformer dynamics in terms of information propagation on graphs. We now build on this perspective to design a Graph Transformer with stable, non-dissipative propagation. 
We enforce a near-orthogonal Jacobian, with singular values close to one, to preserve the norms of forward and backward gradients across discrete composition of layers. The architecture must also retain graph-dependent interactions and permutation equivariance, ensuring that propagation depends on the graph structure but not on node ordering. These principles define our \modelext{} (\model{}), introduced next.

\begin{figure}
    \centering

    \begin{subfigure}[t]{0.48\linewidth}
        \centering
        \includegraphics[width=0.755\linewidth]{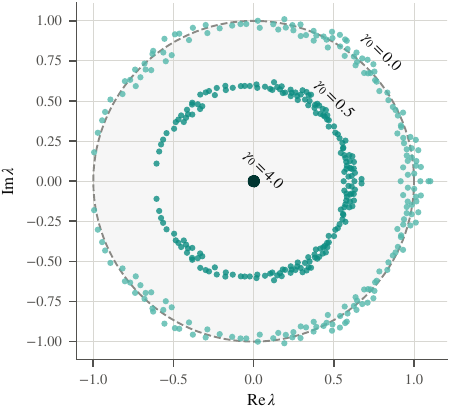}
        \caption{\small
        Jacobian eigenvalue spectra for different $\gamma$.}
        \label{fig:method_gamma_spectrum}
    \end{subfigure}
    \hfill
    \begin{subfigure}[t]{0.48\linewidth}
        \centering
        \includegraphics[width=\linewidth]
        {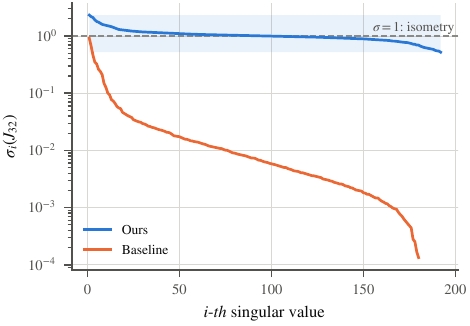}
        \caption{\small
        Jacobian singular values at $L=32$.}
        \label{fig:method_initial_spectrum}
    \end{subfigure}

    \caption{Spectral behavior of \model{}.
    \textbf{(a)} Jacobian eigenvalue spectra of \model{} with $L=8$ for different
    values of $\gamma$. Increasing $\gamma$ progressively moves the spectrum
    from the unit circle into increasingly contractive regimes.
    \textbf{(b)} Jacobian singular values at $L=32$ for \model{} and GT proposed by \cite{qin2025defog}.}
    \label{fig:method_spectral_analysis}
\end{figure}

\subsection{\modelext{}}\label{sec:model_description}
The core idea is to replace the contractive node mixing of the standard GT with a graph-dependent orthogonal propagation operator. We construct it from a skew-symmetric attention-score matrix, whose Cayley transform is guaranteed to be orthogonal \citep{Cayley1846, helfrich2018orthogonalrecurrentneuralnetworks}, thereby enforcing non-dissipative dynamics.

Let $\mathbf{X}^{(\ell)}\in\mathbb{R}^{N\times d}$ denote the node
representations at layer $\ell$, with edge representations
$\mathbf{E}^{(\ell)}$ and a conditioning vector $\mathbf{y}^{(\ell)}$
containing the time-level and graph-level conditioning information described in \Cref{sec:background_related_work}, and described in detail in Appendix \ref{app:additional_experiments}. We split the node channels into $H$ heads,
writing $\mathbf{X}^{(\ell,h)}\in\mathbb{R}^{N\times d_h}$, where $d_h=d/H$.
All equations below are defined over the $N$ active nodes.

Let $\mathbf{S}^{(\ell,h)}$ be the attention-score matrix  at layer $\ell$ and head $h$, obtained from the query-key interactions before softmax normalization. We define the skew-symmetric attention-score matrix\footnote{A matrix $\mathbf{A}\in\mathbb{R}^{d \times d}$ is skew-symmetric if $\mathbf{A}^\top = -\mathbf{A}$.} $\mathbf{K}^{(\ell,h)}$ as
\begin{equation}
    \mathbf{K}^{(\ell,h)}
    = \frac{\varepsilon_{\ell,h}(\mathbf{y}^{(\ell)})}{2\sqrt{N}}
      \left(\mathbf{S}^{(\ell,h)}-
            \mathbf{S}^{(\ell,h)\top}\right),
    \label{eq:node_generator}
\end{equation}
where $\varepsilon_{\ell,h}$ is a learned gate applied to the conditioning vector, and the scaling factor adjusts its magnitude to the graph size. We then apply the Cayley transform
\begin{equation}
\mathbf{R}^{(\ell,h)}
= \cayley(\mathbf{K}^{(\ell,h)})
:= \left(\mathbf{I}-\tfrac12\mathbf{K}^{(\ell,h)}\right)^{-1}
\left(\mathbf{I}+\tfrac12\mathbf{K}^{(\ell,h)}\right).
\label{eq:cayley_node}
\end{equation}
This transform maps $\mathbf{K}^{(\ell,h)}$ to the orthogonal attention-score matrix $\mathbf{R}^{(\ell,h)}$\footnote{Thus, \(\mathbf{R}^{(\ell,h)\top}\mathbf{R}^{(\ell,h)}=\mathbf{I}\).}, which acts as the node-propagation operator. The output of head $h$ is 
\begin{equation}
    \mathbf{Z}^{(\ell,h)} = \mathbf{R}^{(\ell,h)}\mathbf{X}^{(\ell,h)}\mathbf{C}^{(\ell,h)},
\end{equation}
where $\mathbf{C}^{(\ell,h)}= \cayley(\mathbf{W}^{(\ell,h)})$ is an orthogonal channel mixing matrix, obtained by applying the Cayley transform to the learnable skew-symmetric weight matrix $\mathbf{W}^{(\ell,h)}$. The head outputs are then combined as
\begin{align}
    \tilde{\mathbf{X}}^{(\ell)} &= \operatorname{Concat}_{h=1}^{H}
       \bigl(\mathbf{Z}^{(\ell,h)}\bigr)\mathbf{O}^{(\ell)}
       +{f}_{\ell}(\mathbf{y}^{(\ell)})\label{eq:orthogonal_transport}\\
    \mathbf{X}^{(\ell+1)}
    &=\tilde{\mathbf{X}}^{(\ell)}
     +\alpha_{\ell}\,
       \operatorname{FFN}_{\ell}
       \bigl(\operatorname{LN}(\tilde{\mathbf{X}}^{(\ell)})\bigr)
    \label{eq:orthogonal_ffn}
\end{align}
where $\mathbf{O}^{(\ell)}$ is either the identity or a Cayley-parametrized orthogonal matrix that mixes channels across heads, $\alpha_{\ell}$ is a learned scalar, and ${f}_{\ell}$ linearly transforms the conditioning vector. A  final node LayerNorm is applied before the output Feed Forward Neural Network. 

\subsection{Guarantees of Orthogonal Transport}\label{sec:guarantees}
We now establish the main properties of the orthogonal transport defined above.

\textbf{Non-dissipative propagation.} The Cayley-based construction prevents information from progressively decaying across layers. Because both node propagation and channel-mixing are orthogonal, the transport map preserves signal norms throughout the network. The following result formalizes this property and characterizes the Jacobian spectrum.

\begin{restatable}[Isometric node transport]{theorem}{orthogonaltransport}
\label{thm:orthogonal_transport}
Consider the attention submap in \Cref{eq:orthogonal_transport} restricted to active nodes, with edge and global inputs fixed and the attention scores used in the attention branch. Its node Jacobian $\mathbf{J}_{\ell}$ is orthogonal:
\begin{equation}
    \mathbf{J}_{\ell}^{\top}\mathbf{J}_{\ell}=\mathbf{I},
    \qquad \sigma_i(\mathbf{J}_{\ell})=1,
    \qquad |\lambda_i(\mathbf{J}_{\ell})|=1.
    \label{eq:orthogonal_jacobian}
\end{equation}
Therefore, products of these conditional transport Jacobians preserve the norms of perturbations and backpropagated gradients at every depth.
\end{restatable}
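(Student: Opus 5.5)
The plan is to treat the attention submap in \Cref{eq:orthogonal_transport}, under the stated conditioning, as a linear map in the node features and read off its Jacobian. Conditioning means the following are held fixed while differentiating with respect to $\mathbf{X}^{(\ell)}$: the edge inputs $\mathbf{E}^{(\ell)}$, the global vector $\mathbf{y}^{(\ell)}$, and the attention scores $\mathbf{S}^{(\ell,h)}$ that enter $\mathbf{K}^{(\ell,h)}$. Then the gate $\varepsilon_{\ell,h}(\mathbf{y}^{(\ell)})$, the skew-symmetric matrix $\mathbf{K}^{(\ell,h)}$, and hence $\mathbf{R}^{(\ell,h)}$, are constants. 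The channel mixers $\mathbf{C}^{(\ell,h)}$ and $\mathbf{O}^{(\ell)}$ depend only on parameters, and $f_\ell(\mathbf{y}^{(\ell)})$ is an additive constant. The submap is therefore affine:
\[
\mathbf{X}^{(\ell)}\mapsto \operatorname{Concat}_h\bigl(\mathbf{R}^{(\ell,h)}\mathbf{X}^{(\ell,h)}\mathbf{C}^{(\ell,h)}\bigr)\mathbf{O}^{(\ell)} + \text{const}.
\]
Its Jacobian is the linear part.

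\textbf{Step 1: orthogonality of the Cayley factors.} For skew-symmetric $\mathbf{K}$, the eigenvalues of $\mathbf{K}$ are purely imaginary, so $\mathbf{I}-\tfrac12\mathbf{K}$ is invertible and $\cayley(\mathbf{K})$ is well defined. The factors $\mathbf{I}\pm\tfrac12\mathbf{K}$ commute, and $(\mathbf{I}-\tfrac12\mathbf{K})^\top=\mathbf{I}+\tfrac12\mathbf{K}$. Hence
\[
\mathbf{R}^\top\mathbf{R}=(\mathbf{I}-\tfrac12\mathbf{K})(\mathbf{I}+\tfrac12\mathbf{K})^{-1}(\mathbf{I}-\tfrac12\mathbf{K})^{-1}(\mathbf{I}+\tfrac12\mathbf{K})=\mathbf{I}.
\]
The same argument applies to $\mathbf{C}^{(\ell,h)}$, and to $\mathbf{O}^{(\ell)}$ when it is Cayley-parametrized; the identity choice is trivially orthogonal. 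The scalar gate and the $1/(2\sqrt N)$ factor preserve skew-symmetry, so they cause no issue.

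\textbf{Step 2: vectorize.} Using $\operatorname{vec}(\mathbf{R}\mathbf{X}\mathbf{C})=(\mathbf{C}^\top\otimes\mathbf{R})\operatorname{vec}(\mathbf{X})$, each head contributes the block $\mathbf{C}^{(\ell,h)\top}\otimes\mathbf{R}^{(\ell,h)}$, and a Kronecker product of orthogonal matrices is orthogonal. Splitting into heads and concatenating is a coordinate permutation, so the full per-head map is a block-diagonal orthogonal matrix, conjugated by permutations. Right multiplication by $\mathbf{O}^{(\ell)}$ gives the factor $\mathbf{O}^{(\ell)\top}\otimes\mathbf{I}_N$, which is also orthogonal.

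The Jacobian $\mathbf{J}_\ell$ is then a product of orthogonal matrices, so $\mathbf{J}_\ell^\top\mathbf{J}_\ell=\mathbf{I}$. The conclusions follow directly:
\begin{itemize}
\item All singular values equal $1$, since they are the square roots of the eigenvalues of $\mathbf{J}_\ell^\top\mathbf{J}_\ell$.
\item All eigenvalues have modulus $1$: if $\mathbf{J}_\ell v=\lambda v$, then $\|v\|=\|\mathbf{J}_\ell v\|=|\lambda|\|v\|$.
\item For the depth claim, a product of orthogonal Jacobians is orthogonal. Forward perturbations $\delta\mapsto\mathbf{J}\delta$ and backpropagated gradients $g\mapsto\mathbf{J}^\top g$ therefore keep their norm at every depth.
\end{itemize}

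\textbf{Main obstacle.} The delicate point is not algebraic but the precise meaning of ``conditional'' Jacobian. In the real model, $\mathbf{S}^{(\ell,h)}$ depends on $\mathbf{X}^{(\ell)}$ through queries and keys, so the true Jacobian contains extra terms involving $\partial\mathbf{R}/\partial\mathbf{X}$. I would state explicitly that the theorem concerns the transport (value-path) Jacobian with the scores detached, matching the hypothesis ``attention scores used in the attention branch'' held fixed. The depth statement would likewise be phrased for products of these conditional Jacobians, without claiming anything about the FFN residual in \Cref{eq:orthogonal_ffn}.
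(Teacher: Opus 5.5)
Your proposal is correct and follows essentially the same route as the paper. With scores and conditioning detached, each head has Jacobian $\mathbf{C}^{(h)\top}\otimes\mathbf{R}^{(h)}$; these form a block-diagonal orthogonal matrix over heads, which is left-multiplied by $\mathbf{O}^{\top}\otimes\mathbf{I}_N$. The eigenvalue claim then follows from the norm argument, and the depth claim from closure of orthogonal matrices under products. Your direct check that the Cayley transform is orthogonal, and your explicit statement that scores are detached, spell out what the paper cites or assumes.
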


The proof is in Appendix~\ref{app:transport_proof}. By \Cref{sec:effective_information}, the proposed transport lies in the non-dissipative regime and therefore preserves information and gradient propagation across depth. \Cref{fig:method_spectral_analysis} shows the resulting Jacobian spectrum.

\textbf{Permutation equivariance.}
Orthogonalizing the node propagation must preserve a fundamental property
of graph architectures: equivariance to node relabeling. Cayley-based construction, retains, in fact, this property. Intuitively, the result follows from the fact that node relabeling conjugates the attention scores, the skew-symmetric generator, and the corresponding Cayley map by the same permutation matrix, while the remaining channel-wise and node-wise operations commute with the relabeling. Orthogonalizing the propagation operator therefore preserves the GT's permutation equivariance. We show the complete proof in Appendix~\ref{app:permutation_equivariance}. 

\subsection{Controlling \model{} Dynamics}
To assess how dissipation affects generation, we introduce a mechanism that continuously interpolates between non-dissipative and increasingly contractive regimes while keeping the rest of the architecture fixed.
Specifically, we define a nonnegative damping parameter $\gamma$ that shifts the skew-symmetric matrix in \Cref{eq:cayley_node} before the Cayley transform:
\begin{equation}
    \mathbf{R}_{\gamma}^{(\ell,h)}
    = \cayley
      \bigl(\mathbf{K}^{(\ell,h)}-\gamma\mathbf{I}\bigr),
    \qquad
    \gamma=\frac{\gamma_0}{L},
    \label{eq:damped_cayley}
\end{equation}
where $L$ is the number of layers and $\gamma_0\geq 0$ sets the total dissipation across depth. Thus, $\gamma_0=0$ recovers the orthogonal transport above, whereas larger values progressively make the dynamics more contractive.  \Cref{fig:method_gamma_spectrum} illustrates the corresponding change in the Jacobian spectrum.

The spectral effect of damping follows directly from the Cayley transform. Without damping, the eigenvalues of $\mathbf{K}^{(\ell,h)}$ map to the unit circle; shifting by $-\gamma\mathbf{I}$ moves the spectrum into the left half-plane, placing the Cayley transform inside the unit circle. The following result characterizes this transition.

\begin{restatable}[Controlled dissipation]{proposition}{controlleddissipation}
\label{prop:controlled_dissipation}
Let $\mathbf{K}$ be the skew-symmetric attention score matrix from \Cref{eq:node_generator}. For each eigenvalue $\mathrm{i}\omega$ of $\mathbf{K}$, the corresponding eigenvalue of  $\,\mathbf{R}_{\gamma}=\cayley
(\mathbf{K}-\gamma\mathbf{I})$ is
\begin{equation}
    r_{\gamma}(\omega)
    = \frac{1-\gamma/2+\mathrm{i}\omega/2}
           {1+\gamma/2-\mathrm{i}\omega/2},
    \qquad
    |r_{\gamma}(\omega)|^2
    = \frac{(1-\gamma/2)^2+\omega^2/4}
           {(1+\gamma/2)^2+\omega^2/4}.
    \label{eq:damped_spectrum}
\end{equation}
The matrix $\mathbf{R}_{\gamma}$ is normal, and its singular values are $|r_{\gamma}(\omega)|$. 
\end{restatable}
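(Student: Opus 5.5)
The plan is to exploit the fact that a real skew-symmetric matrix is normal, hence unitarily diagonalizable, and that the Cayley transform is a rational function of its argument. First, since $\mathbf{K}^\top=-\mathbf{K}$, we have $\mathbf{K}\mathbf{K}^\top=-\mathbf{K}^2=\mathbf{K}^\top\mathbf{K}$, so $\mathbf{K}$ is normal. By the spectral theorem for normal matrices, there is a unitary $\mathbf{U}\in\mathbb{C}^{N\times N}$ with $\mathbf{K}=\mathbf{U}\,\mathrm{diag}(\mathrm{i}\omega_1,\dots,\mathrm{i}\omega_N)\,\mathbf{U}^{*}$ and $\omega_j\in\mathbb{R}$. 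Purely imaginary eigenvalues follow from $\bar v^{*}\mathbf{K}v$ being imaginary for skew-symmetric $\mathbf{K}$. Consequently $\mathbf{K}-\gamma\mathbf{I}=\mathbf{U}\,\mathrm{diag}(\mathrm{i}\omega_j-\gamma)\,\mathbf{U}^{*}$ is diagonalized by the same $\mathbf{U}$.

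Second, I check that the Cayley transform is well defined. The matrix $\mathbf{I}-\tfrac12(\mathbf{K}-\gamma\mathbf{I})$ has eigenvalues $1+\gamma/2-\mathrm{i}\omega_j/2$, whose real part is $1+\gamma/2>0$ because $\gamma\ge 0$. It is therefore invertible. Since both factors in \Cref{eq:cayley_node} are polynomials in $\mathbf{K}$ sharing the eigenbasis $\mathbf{U}$, they commute, and
\begin{equation*}
\mathbf{R}_\gamma=\mathbf{U}\,\mathrm{diag}\!\left(\frac{1-\gamma/2+\mathrm{i}\omega_j/2}{1+\gamma/2-\mathrm{i}\omega_j/2}\right)\mathbf{U}^{*}.
\end{equation*}
Substituting $\mathbf{K}-\gamma\mathbf{I}$ in the numerator gives $1+\tfrac12(\mathrm{i}\omega-\gamma)$, and in the denominator gives $1-\tfrac12(\mathrm{i}\omega-\gamma)$. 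This yields $r_\gamma(\omega)$, with the eigenvector of $\mathbf{R}_\gamma$ equal to that of $\mathbf{K}$ for $\mathrm{i}\omega$, which is the sense of ``corresponding''. Computing the squared modulus of numerator and denominator separately then gives the expression for $|r_\gamma(\omega)|^2$ in \Cref{eq:damped_spectrum}.

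Third, normality of $\mathbf{R}_\gamma$ is immediate from its unitary diagonalization. For a normal matrix $\mathbf{M}=\mathbf{U}\mathbf{D}\mathbf{U}^{*}$, we have $\mathbf{M}^{*}\mathbf{M}=\mathbf{U}|\mathbf{D}|^2\mathbf{U}^{*}$. Hence the singular values are the moduli $|r_\gamma(\omega_j)|$. Since $\mathbf{R}_\gamma$ is real, $\mathbf{R}_\gamma^{*}=\mathbf{R}_\gamma^\top$, so these are its ordinary real singular values.

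The only step needing care is the second one: justifying that the inverse and product may be evaluated eigenvalue-wise, which requires invertibility and commutation. I would handle this by noting that the whole expression is a rational function $g(\mathbf{K})$ with $g(z)=(1+\tfrac12(z-\gamma))/(1-\tfrac12(z-\gamma))$, whose pole $z=2+\gamma$ lies off the imaginary axis. As a sanity check, $\gamma=0$ gives $|r_0|=1$, consistent with \Cref{thm:orthogonal_transport}. For $\gamma>0$ the numerator modulus is strictly smaller than the denominator's, because $(1-\gamma/2)^2<(1+\gamma/2)^2$, which confirms the contraction.
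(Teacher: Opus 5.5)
Your proof is correct and follows essentially the paper's route: unitary diagonalization of the normal matrix $\mathbf{K}$, invertibility of the inverted factor because its eigenvalues have real part $1+\gamma/2>0$, simultaneous diagonalization of both Cayley factors as functions of $\mathbf{K}$, and the fact that a normal matrix has singular values equal to its eigenvalue moduli (the paper additionally applies each factor directly to an eigenvector to read off $r_{\gamma}(\omega)$). One small slip: the purely imaginary quantity is $v^{*}\mathbf{K}v$, not $\bar v^{*}\mathbf{K}v=v^{\top}\mathbf{K}v$, which is identically zero for skew-symmetric $\mathbf{K}$; in any case, this fact already follows from $\mathbf{K}^{*}=-\mathbf{K}$ together with your diagonalization.
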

The proof is given in Appendix~\ref{app:damping_proof}. Proposition~\ref{prop:controlled_dissipation} shows that $\gamma_0$ directly controls the transport operator's spectral contraction, enabling systematic variation of attenuation across depth without changing the rest of the architecture.

\section{How Does Dissipativity Affect Graph Generation?}\label{sec:experiments}
\begin{wrapfigure}{r}{0.28\linewidth}
    \centering
    \vspace{-2em}
    \includegraphics[width=\linewidth]{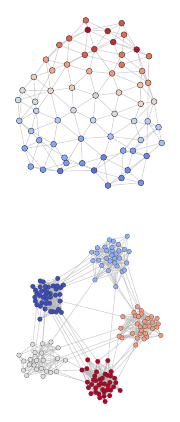}
    \caption{\small Graphs generated by \model{} on Planar (top) and SBM
    (bottom)}
    \label{fig:planar_sbm_pair}
    \vspace{-1em}
\end{wrapfigure}
Our analysis so far identifies dissipation as a constraint on deep GT-based denoisers. We now shift from comparing architectures to a more direct question: \emph{how does the denoiser's dynamical regime affect graph generation?} 
We address this question by varying dissipation and depth jointly while keeping the surrounding flow-matching framework unchanged.

\textbf{Setup.}
We empirically study how dissipativity and denoiser depth influence graph generation, following established experimental
protocols~\citep{luo2026simgfm, qin2025defog, siraudin2024comethcontinuoustimediscretestategraph, vignac2023digress}.
We use \model{}, introduced in \Cref{sec:model_description}, as the
denoising backbone within DeFoG's discrete flow matching framework,
retaining its training objective and CTMC sampling procedure.
For a fair comparison, we retrain all DeFoG baselines reported below using the public implementation and best-performing hyperparameters. We evaluate DeFoG and \model{} with progressively deeper GTs. Using DeFoG as a strong Transformer baseline lets us assess how each method benefits from greater depth and isolate the contribution of orthogonal node transport. Results for the remaining baselines are taken from \citet{qin2025defog}. Further baseline details are in Appendix~\ref{app:baselines}.

\textbf{Datasets and Metrics.}
We evaluate structural validity and distributional agreement on two synthetic benchmarks and two molecular generation tasks, using the metrics of \citet{qin2025defog}.
For the Planar and SBM synthetic datasets \citep{sbmplanar}, we report V.U.N, the fraction of simultaneously
valid, unique, and novel graphs, and Ratio, the average
generated-to-test MMD across graph statistics, normalized by the
corresponding mean training-to-test MMD.
For molecular generation, we use ZINC250k~\citep{irwinZINCFreeTool2012}
and MOSES~\citep{moses}, measuring validity, uniqueness, and
distributional agreement via Frechet ChemNet Distance (FCD).
Additional metrics for both molecular benchmarks are in Appendix~\ref{app:additinal_metrics}.
\par

\subsection{Synthetic Graph Generation}\label{sec:synthetic_experiments}
\begin{table}[t]
\centering
\caption{Graph generation performance on Planar and SBM.
    For our model, we report results at each depth $L$.
    Metrics are computed on 40 generated graphs.
    Higher V.U.N. is better and lower ratio is better. Best in bold, second best underlined.}
\label{tab:synthetic_main}
\small
\begin{tabular}{l cc cc}
    \toprule
    & \multicolumn{2}{c}{Planar} & \multicolumn{2}{c}{SBM} \\
    \cmidrule(lr){2-3}\cmidrule(lr){4-5}
    Model & V.U.N.$\uparrow$ & Ratio$\downarrow$ & V.U.N.$\uparrow$ & Ratio$\downarrow$ \\
    \midrule
    Train set & 100.0 & 1.0 & 85.9 & 1.0 \\
    \midrule
    GraphRNN & 0.0 & 490.2 & 5.0 & 14.7 \\
    GRAN & 0.0 & 2.0 & 25.0 & 9.7 \\
    SPECTRE & 25.0 & 3.0 & 52.5 & 2.2 \\
    EDGE & 0.0 & 431.4 & 0.0 & 51.4 \\
    BwR (EDP-GNN) & 0.0 & 251.9 & 7.5 & 38.6 \\
    BiGG & 5.0 & 16.0 & 10.0 & 11.9 \\
    GraphGen & 7.5 & 210.3 & 5.0 & 48.8 \\
    HSpectre & 95.0 & 2.1 & 75.0 & 10.5 \\
    \midrule
    DiGress & 77.5 & 5.1 & 60.0 & 1.7 \\
    DisCo & 83.6 & --- & 66.2 & --- \\
    Cometh & \underline{99.5} & --- & 75.0 & --- \\
    GruM & 90.0 & \underline{1.8} & 85.0 & \underline{1.1} \\
    CatFlow & 80.0 & --- & 85.0 & --- \\
    DeFoG (retrained $L=8$) & 95.5 & 2.41 & 89.7 & 2.12 \\
    DeFoG (retrained $L=16$) & 90.0 & 3.13 & 84.9 & 2.41 \\
    DeFoG (retrained $L=32$) & 0.0 & 106.12 & 0.0 & 23.98 \\
    \midrule
    \rowcolor{gray!15}
    \model{} ($L=8$) & 90.0 & 10.22 & \textbf{95.0} & 1.38 \\
    \rowcolor{gray!15}
    \model{} ($L=16$) & \textbf{100.0} & 2.84 & \textbf{95.0} & 1.49 \\
    \rowcolor{gray!15}
    \model{} ($L=32$) & \textbf{100.0} & \textbf{1.32} & \underline{92.5} & \textbf{1.05} \\
    \bottomrule
\end{tabular}
\end{table}
 Table~\ref{tab:synthetic_main} shows that the advantage of \model{} grows with depth.
At $L=8$, the depth originally employed by \cite{qin2025defog}, results are mixed: \model{} improves on retrained DeFoG on SBM
(V.U.N.\ $95.0\%$ versus $89.7\%$, Ratio $1.38$ versus $2.12$) but trails it on Planar
($90.0\%$ versus $95.5\%$, Ratio $10.22$ versus $2.41$).
From $L=16$ onward, \model{} outperforms DeFoG on both datasets, and at $L=32$ DeFoG collapses
($0.0\%$ V.U.N.) while \model{} attains its lowest Ratio on both benchmarks.

Planar shows the clearest depth dependence: increasing $L$ from $8$
to $32$ drives Ratio from $10.22$ down to $1.32$, outperforming DeFoG's results. Notably, V.U.N. and Ratio improve on different depth scales: V.U.N. reaches $100.0\%$ at $L=16$ and remains saturated, whereas Ratio more than halves across the remaining layers. Thus greater depth continues to sharpen the generated graph distribution even after V.U.N. saturates.
SBM likewise shows a consistent overall improvement in Ratio.

At $L=32$, Ratio reaches its minimum of $1.05$, while V.U.N.
remains above DeFoG's reference.
To isolate the effect of the dynamical regime, Appendix~\ref{app:ablation_gamma}
reports an ablation that progressively increases \model{}'s contraction through $\gamma_0$. As the dynamics become increasingly contractive and approach the regime observed for DeFoG (and related denoisers in \Cref{sec:theory}),
generation performance consistently declines. This offers independent 
evidence that deeper denoisers improve performance by maintaining 
propagation non-dissipative, rather than from depth alone.
We additionally show samples from \model{} final trained models on synthetic benchmarks in \Cref{fig:planar_sbm_pair}. 

\subsection{Molecular Graph Generation}

\begin{table}[t]
\centering
\caption{Molecule generation on ZINC ($10{,}000$ samples) and MOSES ($25{,}000$ samples); DeFoG is retrained by us. MOSES FCD is measured against the scaffold-split test set (TestSF), as for all baselines; Test-split FCD is in \Cref{tab:moses_full}. Best in bold, second best underlined.}
\label{tab:molecular_generation}
\footnotesize
\setlength{\tabcolsep}{3pt}
\begin{minipage}[t]{0.48\linewidth}
\centering
\begin{tabular}[t]{@{}lccc@{}}
\multicolumn{4}{c}{\textbf{ZINC}}\\
\toprule
Model & Val.$\uparrow$ & Unique.$\uparrow$ & FCD$\downarrow$ \\
\midrule
GruM & 98.7 & -- & 2.26 \\
GBD & 97.9 & -- & 2.25 \\
CatFlow & \underline{99.2} & \textbf{100.0} & 13.21 \\
GGFlow & \textbf{99.6} & \textbf{100.0} & 1.45 \\
DeFoG (retrained $L=8$) & \underline{99.2} & \textbf{100.0} & 1.43 \\
DeFoG (retrained $L=16$) & 95.3 & \textbf{100.0} & 1.81 \\
\midrule
\rowcolor{gray!15}
\model{} ($L=8$) & 98.4 & \textbf{100.0} & 0.94 \\
\rowcolor{gray!15}
\model{} ($L=16$) & 98.1 & \textbf{100.0} & \underline{0.86} \\
\rowcolor{gray!15}
\model{} ($L=32$) & 98.1 & \textbf{100.0} & \textbf{0.85} \\
\bottomrule
\end{tabular}
\end{minipage}\hfill
\begin{minipage}[t]{0.48\linewidth}
\centering
\begin{tabular}[t]{@{}lccc@{}}
\multicolumn{4}{c}{\textbf{MOSES}}\\
\toprule
Model & Val.$\uparrow$ & Unique.$\uparrow$ & FCD$\downarrow$ \\
\midrule
Training set & 100.0 & 100.0 & 0.64 \\
\midrule
GraphInvent & \textbf{96.4} & 99.8 & 1.22 \\
DiGress & 85.7 & \textbf{100.0} & 1.19 \\
DisCo & 88.3 & \textbf{100.0} & 1.44 \\
Cometh & 90.5 & \underline{99.9} & 1.27 \\
SimGFM & 89.4 & \textbf{100.0} & \textbf{1.08} \\
DeFoG (retrained $L=8$) & 89.3 & \underline{99.9} & 1.34 \\
DeFoG (retrained $L=16$) & 86.1 & \underline{99.9} & 1.68 \\
\midrule
\rowcolor{gray!15}
\model{} ($L=8$) & 91.7 & \textbf{100.0} & 1.23 \\
\rowcolor{gray!15}
\model{} ($L=16$) & \underline{92.7} & \textbf{100.0} & 1.15 \\
\rowcolor{gray!15}
\model{} ($L=32$) & 92.1 & \textbf{100.0} & \underline{1.10} \\
\bottomrule
\end{tabular}
\end{minipage}
\end{table}
 We next test whether the advantages of depth carry over to molecular
generation, where models must jointly ensure chemical validity and match the target distribution.
Table~\ref{tab:molecular_generation} reports results from $10{,}000$ (ZINC) and $25{,}000$ (MOSES)
molecules generated by \model{} with $L\in\{8,16,32\}$ on ZINC and
MOSES.

For these experiments, we set $\gamma_0=0$, corresponding to the fully non-dissipative regime. This choice is motivated by the results in \Cref{sec:synthetic_experiments} and by the ablation in Appendix~\ref{app:ablation_gamma}, where increasing dissipation consistently degrades generation quality. We therefore use the non-dissipative variant of \model{} for molecular generation, while standard contractive GT-based models, such as DiGress and DeFoG, provide the comparison with dissipative dynamics. Additional metrics for both ZINC and MOSES, including training cost, are reported in Appendix~\ref{app:additinal_metrics}.

\textbf{ZINC.}
Generation quality improves with depth: FCD decreases from $0.94$ at $L=8$ to $0.85$ at $L=32$, with a further improvement at $L=32$. At matched depth, \model{} consistently outperforms retrained DeFoG, whose FCD instead worsens from $1.43$ at $L=8$ to $1.81$ at $L=16$. DeFoG's validity also declines from $99.2\%$ to $95.3\%$, whereas \model{} achieves $98.1\%$ at both $L=16$ and $L=32$. Uniqueness remains at $100.0\%$ for both methods.
Hence, added depth improves \model{}'s distributional fit while preserving high validity; deeper DeFoG models instead degrade on both metrics.

\textbf{MOSES.}
A depth-dependent trend similar to ZINC emerges on MOSES.
At $L=8$, \model{} already improves over retrained DeFoG in both validity ($91.7\%$ versus $89.3\%$) and FCD ($1.23$ versus $1.34$).
At $L=16$, validity rises to $92.7\%$; with $L=32$ our \model{} further raises validity to $92.1\%$, while dropping FCD at 1.10.
Uniqueness remains at $100.0\%$ across all depths. Conversely, deepening DeFoG from $L=8$ to $L=16$ reduces validity from $89.3\%$ to $86.1\%$ and worsens FCD from $1.34$ to $1.68$.
Therefore, additional layers progressively improve \model{} while degrading the standard denoiser.
Overall, these results indicate that orthogonal node transport is not confined to very deep architectures and it enables deeper denoisers to improve distributional agreement on both datasets and validity on MOSES.

\subsection{Relating Depth Gains to Representation Dynamics}
\label{sec:experiments-dynamics}

To connect these gains to the mechanism in Section~\ref{sec:theory}, we probe the forward dynamics of trained Planar denoisers across increasing depths. Following~\citep{attention_is_not_all_you_need}, we track Dong residual
$\|\mathbf{X}-\mathbf{1}\bar{\mathbf{x}}^{\top}\|_F/\|\mathbf{X}\|_F$, where
$\bar{\mathbf{x}}$ denotes the mean node representation, and the effective rank 
of the node-feature matrix~\citep{effectiverank}, $\exp[-\sum_i p_i\log p_i]$ with
$p_i=\sigma_i(\mathbf{X})/\sum_j\sigma_j(\mathbf{X})$. Dong residual near zero or effective rank near one indicates collapse
to a shared node representation. 
\Cref{fig:method_rank_collapse} shows that the baseline denoiser progressively loses representational diversity with depth, as both Dong residual and effective rank approach their collapse limits. In contrast, \model{}, trained at the same depth, preserves both metrics, maintaining distinct node representations throughout the forward pass. This difference is most pronounced in the deep regime, where the baseline collapses and \model{} achieves its best generation quality according to \Cref{tab:synthetic_main}. These results align the empirical gains from depth with the predicted dynamical behavior: while standard transport progressively erases node-specific information, the proposed non-dissipative transport preserves it across layers, enabling \model{} to exploit additional depth without representation collapse.

\begin{figure}[t]
    \centering
    \includegraphics[width=\linewidth]{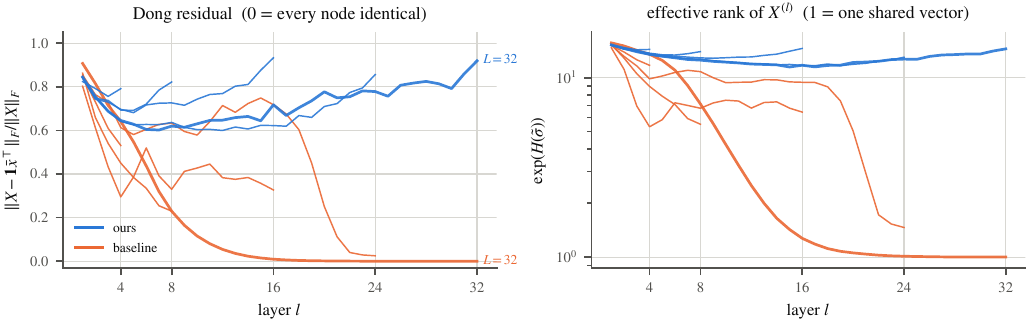}
    \caption{Representation dynamics of trained Planar models across
    depth. \textbf{Left:} Dong residual between node states. \textbf{Right:} effective rank of the node-feature matrix. Each curve ends at its
    model depth; the $L=32$ curves are highlighted. Orthogonal transport
    mitigates the collapse observed in the deep baseline for these checkpoints.}
    \label{fig:method_rank_collapse}
    \vspace{-3mm}
\end{figure}

 \section{Conclusion}\label{sec:conclusion}

We investigated graph generation through the dynamical regime induced by Graph Transformer denoisers. Our analysis reveals that standard self-attention becomes increasingly contractive with depth, causing gradients to vanish and node-specific information to collapse. This finding motivated \model{}, a permutation-equivariant GT with intrinsically stable, non-dissipative transport. Across synthetic and molecular benchmarks, the advantage of its non-dissipative dynamics grows with depth: deeper contractive denoisers degrade and eventually collapse, whereas \model{} continues to benefit from added layers; within \model{}, increasing damping progressively degrades generation quality. These gains align with sustained gradient flow and preserved diversity among node representations in deep regimes. Taken together, our results establish spectral dynamics, not architectural capacity alone, as a key determinant of whether depth strengthens or undermines graph generative models.

\bibliographystyle{iclr2027_conference}

\appendix

\section{Proofs for the \modelext{}}
\label{app:orthogonal_transformer}

We use the notation of Section~\ref{sec:method}. All orthogonality and
conditioning statements concern active node coordinates and exact arithmetic.
Unless stated otherwise, layer parameters are fixed. The conditional transport
Jacobian holds the edge and global inputs fixed and uses the detached-score
differential specified in Theorem~\ref{thm:orthogonal_transport}.

\subsection{Proof of Theorem~\ref{thm:orthogonal_transport}} 
\orthogonaltransport*

\begin{proof}
\label{app:transport_proof}
Consider a single layer $\ell$ and a single attention head $h$.
For each $h$, skew-symmetry of $\mathbf{K}^{(h)}$
implies that its Cayley transform $\mathbf{R}^{(h)}$ is
orthogonal~\citep{helfrich2018orthogonalrecurrentneuralnetworks}.
The orthogonal head update is then:
\[
    \mathbf{Z}^{(h)}
    = \mathbf{R}^{(h)}\mathbf{X}^{(h)}\mathbf{C}^{(h)}.
\]
Holding the attention matrix fixed, its element-wise derivative is
\[
    \frac{\partial Z^{(h)}_{ia}}{\partial X^{(h)}_{jb}}
    = R^{(h)}_{ij}C^{(h)}_{ba}.
\]
With column-wise vectorization the Jacobian of a single head is therefore
\[
    \mathbf{J}^{(h)}
    = \mathbf{C}^{(h)\top}\otimes\mathbf{R}^{(h)}.
\]
Using the transpose and multiplication identities for
Kronecker products, we obtain
\begin{align}
    \mathbf{J}^{(h)\top}\mathbf{J}^{(h)}
    &=
    \bigl(\mathbf{C}^{(h)}\otimes\mathbf{R}^{(h)\top}\bigr)
    \bigl(\mathbf{C}^{(h)\top}\otimes\mathbf{R}^{(h)}\bigr)
    \nonumber\\
    &=
    \bigl(\mathbf{C}^{(h)}\mathbf{C}^{(h)\top}\bigr)
    \otimes
    \bigl(\mathbf{R}^{(h)\top}\mathbf{R}^{(h)}\bigr)
    = \mathbf{I}_{Nd_h}.
\end{align}

Since the heads act on independent and disjoint blocks, their
concatenation has the orthogonal Jacobian
\[
    \mathbf{D}
    = \operatorname{diag}
      \bigl(\mathbf{J}^{(1)},\ldots,\mathbf{J}^{(H)}\bigr).
\]
Right multiplication by $\mathbf{O}$ has Jacobian
$\mathbf{O}^{\top}\otimes\mathbf{I}_N$, while the additive
bias has zero derivative under the fixed-conditioning
assumption. Thus,
\[
    \mathbf{J}_{\ell}
    = (\mathbf{O}^{\top}\otimes\mathbf{I}_N)\mathbf{D},
\]
and
\[
    \mathbf{J}_{\ell}^{\top}\mathbf{J}_{\ell}
    = \mathbf{D}^{\top}
      \bigl((\mathbf{O}\mathbf{O}^{\top})
      \otimes\mathbf{I}_N\bigr)\mathbf{D}
    = \mathbf{I}_{Nd}.
\]

This identity implies that every singular value of
$\mathbf{J}_{\ell}$ equals one.
For any eigenpair
$\mathbf{J}_{\ell}\mathbf{v}=\lambda\mathbf{v}$,
with $\mathbf{v}\in\mathbb{C}^{Nd}\setminus\{0\}$,
orthogonality also gives
\[
    \|\mathbf{v}\|_2
    = \|\mathbf{J}_{\ell}\mathbf{v}\|_2
    = |\lambda|\,\|\mathbf{v}\|_2,
\]
hence $|\lambda|=1$.
Finally, the chain rule expresses the propagation Jacobian
across $L$ layers as
$\mathbf{J}_{L-1}\cdots\mathbf{J}_0$.
Since a product of orthogonal matrices is orthogonal, this completes the proof.
\end{proof}

\subsection{Damped Spectrum and Composition Bounds}
\label{app:damping_proof}

\controlleddissipation*

\begin{proof}[Proof of Proposition~\ref{prop:controlled_dissipation}]
Let $\gamma\geq 0$. Since $\mathbf{K}$ is real and
skew-symmetric, its eigenvalues are purely imaginary.
Let $\mathbf{v}\in\mathbb{C}^{N}\setminus\{0\}$ satisfy
$\mathbf{K}\mathbf{v}=\mathrm{i}\omega\mathbf{v}$,
with $\omega\in\mathbb{R}$. By definition,
\begin{equation}
    \mathbf{R}_{\gamma}
    =
    \left((1+\gamma/2)\mathbf{I}-\tfrac12\mathbf{K}\right)^{-1}
    \left((1-\gamma/2)\mathbf{I}+\tfrac12\mathbf{K}\right).
\end{equation}
The matrix being inverted is nonsingular, since each of its
eigenvalues has real part $1+\gamma/2>0$.

Applying the second factor to $\mathbf{v}$ gives
\begin{align}
    \left((1-\gamma/2)\mathbf{I}+\tfrac12\mathbf{K}\right)\mathbf{v}
    &=(1-\gamma/2)\mathbf{v}+\tfrac12\mathbf{K}\mathbf{v}
    \nonumber\\
    &=\left(1-\gamma/2+\mathrm{i}\omega/2\right)\mathbf{v}.
\end{align}
Similarly, $\mathbf{v}$ is an eigenvector of the inverting factor,
\begin{equation}
    \left((1+\gamma/2)\mathbf{I}-\tfrac12\mathbf{K}\right)\mathbf{v}
    =\left(1+\gamma/2-\mathrm{i}\omega/2\right)\mathbf{v},
\end{equation}
so it is also an eigenvector of its inverse, with the reciprocal
eigenvalue $\bigl(1+\gamma/2-\mathrm{i}\omega/2\bigr)^{-1}$
(the scalar is nonzero because its real part is $1+\gamma/2>0$):
\begin{equation}
    \left((1+\gamma/2)\mathbf{I}-\tfrac12\mathbf{K}\right)^{-1}
    \mathbf{v}
    =
    \frac{1}{1+\gamma/2-\mathrm{i}\omega/2}\,\mathbf{v}.
\end{equation}
Combining these identities, we obtain
\begin{equation}
    \mathbf{R}_{\gamma}\mathbf{v}
    =
    \frac{1-\gamma/2+\mathrm{i}\omega/2}
         {1+\gamma/2-\mathrm{i}\omega/2}\,\mathbf{v}.
\end{equation}
Thus the eigenvalue corresponding to $\mathbf{v}$ is
$r_{\gamma}(\omega)$ as stated in
\eqref{eq:damped_spectrum}. Taking its squared modulus gives
\begin{equation}
    |r_{\gamma}(\omega)|^2
    =
    \frac{(1-\gamma/2)^2+\omega^2/4}
         {(1+\gamma/2)^2+\omega^2/4}.
\end{equation}

It remains to show that $\mathbf{R}_{\gamma}$ is normal with
singular values $|r_{\gamma}(\omega)|$. Being real and
skew-symmetric, $\mathbf{K}$ is normal, hence unitarily
diagonalizable: $\mathbf{K}=\mathbf{U}\boldsymbol{\Lambda}\mathbf{U}^{*}$
with $\mathbf{U}$ unitary and
$\boldsymbol{\Lambda}=\operatorname{diag}(\mathrm{i}\omega_1,\dots,\mathrm{i}\omega_N)$.
Since both factors defining $\mathbf{R}_{\gamma}$ are polynomials in
$\mathbf{K}$, they are simultaneously diagonalized by the same
$\mathbf{U}$, and therefore
\begin{equation}
    \mathbf{R}_{\gamma}
    =
    \mathbf{U}\,r_{\gamma}(\boldsymbol{\Lambda})\,\mathbf{U}^{*},
    \qquad
    r_{\gamma}(\boldsymbol{\Lambda})
    =
    \operatorname{diag}\bigl(r_{\gamma}(\omega_1),\dots,r_{\gamma}(\omega_N)\bigr).
\end{equation}
Thus $\mathbf{R}_{\gamma}$ is unitarily diagonalizable and hence
normal. For a normal matrix the singular values coincide with the
moduli of its eigenvalues, so the singular values of
$\mathbf{R}_{\gamma}$ are exactly $|r_{\gamma}(\omega)|$.

In particular,
\begin{equation}
    1-|r_{\gamma}(\omega)|^2
    =
    \frac{2\gamma}{(1+\gamma/2)^2+\omega^2/4}.
\end{equation}
Hence $|r_{\gamma}(\omega)|=1$ when $\gamma=0$,
whereas $|r_{\gamma}(\omega)|<1$ when $\gamma>0$.
\end{proof}

\begin{corollary}[Depth-scaled attenuation]
\label{cor:depth_scaled_damping}
Consider $L$ damped transport layers with orthogonal channel matrices
$\mathbf{C}^{(\ell,h)}$ and orthogonal (or identity) cross-head mixing
$\mathbf{O}^{(\ell)}$, holding attention scores and conditioning fixed when
differentiating. For $0\leq\gamma<2$, define
\[
    a_{\gamma}
    := \frac{1-\gamma/2}{1+\gamma/2}.
\]
Each conditional node-propagation Jacobian $\mathbf{J}_{\ell}$
has singular values in $[a_{\gamma},1]$, and their composition
satisfies
\begin{equation}
    a_{\gamma}^{L}
    \leq \sigma_{\min}(\mathbf{J}_{L-1}\cdots\mathbf{J}_0)
    \leq \sigma_{\max}(\mathbf{J}_{L-1}\cdots\mathbf{J}_0)
    \leq 1.
    \label{eq:depth_scaled_damping}
\end{equation}
In particular, for fixed $\gamma_0\geq 0$ and
$\gamma=\gamma_0/L$ with $L>\gamma_0/2$,
\begin{equation}
    \lim_{L\to\infty}a_{\gamma_0/L}^{L}
    = e^{-\gamma_0}.
    \label{eq:damping_limit}
\end{equation}
\end{corollary}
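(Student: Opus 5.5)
The plan is to reduce everything to the single-head damped Cayley factor via the Kronecker structure from the proof of Theorem~\ref{thm:orthogonal_transport}, then control its singular values with Proposition~\ref{prop:controlled_dissipation}, and finally compose across layers.

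\textbf{Step 1: single-layer Jacobian.} Repeating the computation of Theorem~\ref{thm:orthogonal_transport} with $\mathbf{R}^{(\ell,h)}$ replaced by $\mathbf{R}^{(\ell,h)}_\gamma$, the head Jacobian is $\mathbf{C}^{(h)\top}\otimes\mathbf{R}^{(h)}_\gamma$. The layer Jacobian is $\mathbf{J}_\ell=(\mathbf{O}^\top\otimes\mathbf{I}_N)\operatorname{diag}_h(\mathbf{C}^{(h)\top}\otimes\mathbf{R}^{(h)}_\gamma)$.

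The Kronecker product has singular values equal to products of singular values. Orthogonal factors contribute ones, so the singular values of $\mathbf{J}_\ell$ are exactly the union over heads of the singular values of $\mathbf{R}^{(h)}_\gamma$, each repeated $d_h$ times. Left multiplication by the orthogonal $\mathbf{O}^\top\otimes\mathbf{I}_N$ leaves singular values unchanged.

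\textbf{Step 2: the interval $[a_\gamma,1]$.} By Proposition~\ref{prop:controlled_dissipation}, these singular values are $|r_\gamma(\omega)|$ with $\omega\in\mathbb{R}$. Write $u=\omega^2/4\ge0$, so that
\[
|r_\gamma|^2=\frac{(1-\gamma/2)^2+u}{(1+\gamma/2)^2+u}.
\]
This has the form $(p+u)/(q+u)$ with $0\le p\le q$, hence it is nondecreasing in $u$. It is therefore bounded below by its value at $u=0$, namely $a_\gamma^2$, where $1-\gamma/2>0$ requires $\gamma<2$. It is bounded above by $1$, using the identity $1-|r_\gamma|^2\ge0$ from the proof of the proposition.

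\textbf{Step 3: composition bounds.} Use submultiplicativity, $\sigma_{\max}(\mathbf{AB})\le\sigma_{\max}(\mathbf{A})\sigma_{\max}(\mathbf{B})$, together with its dual, $\sigma_{\min}(\mathbf{AB})\ge\sigma_{\min}(\mathbf{A})\sigma_{\min}(\mathbf{B})$ for square matrices. The dual follows from $\|\mathbf{AB}x\|\ge\sigma_{\min}(\mathbf{A})\|\mathbf{B}x\|$. Inducting over the $L$ layers gives \eqref{eq:depth_scaled_damping}.

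\textbf{Step 4: the limit.} Set $\gamma=\gamma_0/L$; the condition $L>\gamma_0/2$ ensures $\gamma<2$. Then
\[
L\log a_\gamma=L\bigl[\log(1-\gamma_0/(2L))-\log(1+\gamma_0/(2L))\bigr].
\]
A first-order expansion of $\log(1\pm x)=\pm x+O(x^2)$ makes this $-\gamma_0+O(1/L)$. Exponentiating gives $e^{-\gamma_0}$. The case $\gamma_0=0$ is trivial.

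\textbf{Main obstacle.} The only delicate point is Step 1: checking that the singular values of the Kronecker and block-diagonal assembly are inherited exactly from the $\mathbf{R}_\gamma^{(h)}$. The $\mathbf{R}_\gamma^{(h)}$ are no longer orthogonal, so the argument of Theorem~\ref{thm:orthogonal_transport} via $\mathbf{J}^\top\mathbf{J}=\mathbf{I}$ does not apply. Instead, compute
\[
\mathbf{J}_\ell^\top\mathbf{J}_\ell=\operatorname{diag}_h\bigl(\mathbf{I}\otimes\mathbf{R}_\gamma^{(h)\top}\mathbf{R}_\gamma^{(h)}\bigr),
\]
using $\mathbf{C}\mathbf{C}^\top=\mathbf{I}$ and $\mathbf{O}\mathbf{O}^\top=\mathbf{I}$, and read off its eigenvalues.
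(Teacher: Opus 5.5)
Your proposal is correct and follows essentially the same route as the paper: reduce $\mathbf{J}_\ell$ via its Kronecker and block-diagonal structure to the singular values of $\mathbf{R}_\gamma^{(h)}$, use monotonicity of $(\alpha+s)/(\beta+s)$ in $s=\omega^2/4$, apply the sub- and super-multiplicativity of $\sigma_{\max}$ and $\sigma_{\min}$ for square matrices, and finish with a logarithmic expansion for the limit. The differences are cosmetic: you add an explicit $\mathbf{J}_\ell^\top\mathbf{J}_\ell$ cross-check, and your first-order expansion gives an $O(1/L)$ error where the paper's use of the odd symmetry of $\log\frac{1-x}{1+x}$ gives $O(1/L^2)$; either suffices for the limit.
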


\begin{proof}
Fix a layer $\ell$ and omit its index. As in the proof of
Theorem~\ref{thm:orthogonal_transport}, with scores and conditioning held
fixed, the damped attention submap has Jacobian
\[
    \mathbf{J}
    = (\mathbf{O}^{\top}\otimes\mathbf{I}_N)\,
      \operatorname{diag}\bigl(
      \mathbf{C}^{(1)\top}\otimes\mathbf{R}^{(1)}_{\gamma},\ldots,
      \mathbf{C}^{(H)\top}\otimes\mathbf{R}^{(H)}_{\gamma}\bigr).
\]
The singular values of a Kronecker product are the pairwise products of the
singular values of its factors. Since $\mathbf{C}^{(h)}$ is orthogonal, the
singular values of $\mathbf{C}^{(h)\top}\otimes\mathbf{R}^{(h)}_{\gamma}$ are
those of $\mathbf{R}^{(h)}_{\gamma}$, each repeated $d_h$ times. A
block-diagonal matrix has the union of the singular values of its blocks, and
left multiplication by the orthogonal matrix
$\mathbf{O}^{\top}\otimes\mathbf{I}_N$ leaves singular values unchanged.
By Proposition~\ref{prop:controlled_dissipation}, the singular values of
$\mathbf{J}$ are therefore the moduli $|r_{\gamma}(\omega)|$, where
$\mathrm{i}\omega$ ranges over the eigenvalues of the generators
$\mathbf{K}^{(h)}$.

Write $s=\omega^2/4\geq 0$, $\alpha=(1-\gamma/2)^2$ and
$\beta=(1+\gamma/2)^2$, so that
$|r_{\gamma}(\omega)|^2=(\alpha+s)/(\beta+s)$. Since $\alpha\leq\beta$, this
function is nondecreasing in $s$ and bounded above by one; its minimum over
$s\geq 0$ is $\alpha/\beta$, attained at $s=0$. For $0\leq\gamma<2$ we have
$\sqrt{\alpha/\beta}=a_{\gamma}$, hence every singular value of
$\mathbf{J}_{\ell}$ lies in $[a_{\gamma},1]$.

For square matrices $\mathbf{A},\mathbf{B}$,
$\sigma_{\min}(\mathbf{A}\mathbf{B})\geq\sigma_{\min}(\mathbf{A})\,\sigma_{\min}(\mathbf{B})$
and
$\sigma_{\max}(\mathbf{A}\mathbf{B})\leq\sigma_{\max}(\mathbf{A})\,\sigma_{\max}(\mathbf{B})$.
Applying these inequalities inductively to $\mathbf{J}_{L-1}\cdots\mathbf{J}_0$
yields \Cref{eq:depth_scaled_damping}.

Finally, let $x=\gamma_0/(2L)\in[0,1)$. Then
\[
    L\log a_{\gamma_0/L}
    = L\bigl[\log(1-x)-\log(1+x)\bigr]
    = -2Lx + O(Lx^{3})
    = -\gamma_0 + O\!\left(\gamma_0^{3}/L^{2}\right),
\]
which tends to $-\gamma_0$ as $L\to\infty$, proving \Cref{eq:damping_limit}.
\end{proof}

\paragraph{Padding and damping range.}
After grouping active and padded coordinates, the masked generator and damping
matrix take the block forms
\[
    \mathbf{K}_{\mathrm{pad}}
    =\begin{pmatrix}\mathbf{K}_{\mathrm{act}}&0\\0&0\end{pmatrix},
    \mathbf{G}
    =\begin{pmatrix}\gamma\mathbf{I}_N&0\\0&0\end{pmatrix}.
\]
Their Cayley map is
$\operatorname{diag}(\operatorname{cay}(\mathbf{K}_{\mathrm{act}}
-\gamma\mathbf{I}_N),\mathbf{I})$. This establishes decoupling and the
identity action on padding. Normality here follows from the uniform damping
on the active block; arbitrary unequal nodewise damping need not commute with
the generator. At $\gamma=2$, zero-frequency modes are annihilated, so the
positive lower bound requires $\gamma<2$. Contractivity itself holds for
every $\gamma>0$.

\subsection{Permutation Equivariance}
\label{app:permutation_equivariance}

Let $\mathbf{P}\in\mathbb{R}^{M\times M}$ be a permutation matrix,
where $M$ includes padded coordinates. The node mask
$\mathbf{m}\in\{0,1\}^{M}$ satisfies $m_i=1$ for active nodes
and $m_i=0$ for padding, so that
$N=\mathbf{1}^{\top}\mathbf{m}$ is the active-node count.
For each edge channel $c$, define
$(\mathbf{P}\cdot\mathbf{E})_{::c}
=\mathbf{P}\mathbf{E}_{::c}\mathbf{P}^{\top}$.
The action on the graph state is
\begin{equation}
    \mathbf{P}\cdot(\mathbf{X},\mathbf{E},\mathbf{y},\mathbf{m})
    =(\mathbf{P}\mathbf{X},\mathbf{P}\cdot\mathbf{E},
      \mathbf{y},\mathbf{P}\mathbf{m}).
    \label{eq:app_permutation_action}
\end{equation}

We next show that the Cayley-based construction preserves this permutation action, yielding permutation equivariance of \model{}.
\begin{theorem}[Permutation equivariance]
    \model{} is permutation equivariant.
\end{theorem}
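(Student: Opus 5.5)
The plan is to prove equivariance layer by layer and then compose. Write one \model{} layer as a map $F_\ell$ on graph states $(\mathbf{X},\mathbf{E},\mathbf{y},\mathbf{m})$. We show that $F_\ell(\mathbf{P}\cdot(\mathbf{X},\mathbf{E},\mathbf{y},\mathbf{m})) = \mathbf{P}\cdot F_\ell(\mathbf{X},\mathbf{E},\mathbf{y},\mathbf{m})$ for the action in \eqref{eq:app_permutation_action}. Composition of equivariant maps is equivariant, and the input embedding and output heads are nodewise or edgewise maps; together these give the theorem.

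\textbf{Step 1: the scores transform by conjugation.} Following the DiGress/DeFoG backbone, the scores $\mathbf{S}^{(h)}$ are built from queries $\mathbf{Q}=\mathbf{X}\mathbf{W}_Q$ and keys $\mathbf{X}\mathbf{W}_K$, modulated entrywise by edge features $\mathbf{E}_{ij}$ and by $\mathbf{y}$. Under the action, $\mathbf{Q}\mapsto\mathbf{P}\mathbf{Q}$ and the keys transform the same way, while each edge channel transforms as $\mathbf{E}_{::c}\mapsto\mathbf{P}\mathbf{E}_{::c}\mathbf{P}^\top$. The entrywise combinations therefore give $\mathbf{S}\mapsto\mathbf{P}\mathbf{S}\mathbf{P}^\top$. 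The mask also transforms by $\mathbf{m}\mapsto\mathbf{P}\mathbf{m}$, so the masked score matrix conjugates in the same way.

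\textbf{Step 2: the generator and its Cayley map are equivariant.} The active-node count $N=\mathbf{1}^\top\mathbf{P}\mathbf{m}=\mathbf{1}^\top\mathbf{m}$ and the gate $\varepsilon_{\ell,h}(\mathbf{y})$ are both invariant. It follows that $\mathbf{K}\mapsto\mathbf{P}\mathbf{K}\mathbf{P}^\top$, because $(\mathbf{P}\mathbf{S}\mathbf{P}^\top)^\top=\mathbf{P}\mathbf{S}^\top\mathbf{P}^\top$. Since $\mathbf{P}$ is orthogonal, $\mathbf{I}\pm\tfrac12(\mathbf{P}\mathbf{K}\mathbf{P}^\top-\gamma\mathbf{I})=\mathbf{P}\bigl(\mathbf{I}\pm\tfrac12(\mathbf{K}-\gamma\mathbf{I})\bigr)\mathbf{P}^\top$. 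Inversion commutes with conjugation, and the inverse exists by the argument in Proposition~\ref{prop:controlled_dissipation}. Hence $\cayley(\mathbf{P}\mathbf{K}\mathbf{P}^\top-\gamma\mathbf{I})=\mathbf{P}\,\cayley(\mathbf{K}-\gamma\mathbf{I})\,\mathbf{P}^\top$. For padding, the block form shown in the preceding paragraph is carried to the permuted block form by the same $\mathbf{P}$, because the damping matrix $\mathbf{G}=\gamma\,\mathrm{diag}(\mathbf{m})$ also conjugates.

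\textbf{Step 3: the remaining operations.} The head output becomes $\mathbf{P}\mathbf{R}\mathbf{P}^\top\mathbf{P}\mathbf{X}\mathbf{C}=\mathbf{P}\mathbf{Z}$. Concatenation over channels, right multiplication by $\mathbf{O}$, and the broadcast bias $\mathbf{1}f_\ell(\mathbf{y})^\top$ all commute with left multiplication by $\mathbf{P}$, since $\mathbf{P}\mathbf{1}=\mathbf{1}$. LayerNorm, the FFN and the scalar $\alpha_\ell$ act rowwise, so they commute with $\mathbf{P}$ as well. The edge and global updates are treated as in DiGress. Edge updates are entrywise functions of $\mathbf{E}_{ij}$, $\mathbf{X}_i$, $\mathbf{X}_j$ and $\mathbf{y}$, so they conjugate. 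Global updates use permutation-invariant pooling over active nodes and edges, such as means and max, so they are invariant.

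\textbf{Main obstacle.} The delicate step is the score construction together with the masking and padding. One must check that every statistic feeding $\mathbf{y}$ or the gate is a symmetric pooling over the active set and that no node-index-dependent quantity enters. The Cayley step itself is routine once the scores are shown to conjugate.
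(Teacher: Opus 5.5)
Your proposal is correct and follows essentially the same route as the paper. Both arguments show that the edge-modulated scores conjugate, use invariance of $N$ and of the gate together with $\mathbf{D}_{\mathbf{P}\mathbf{m}}=\mathbf{P}\mathbf{D}_{\mathbf{m}}\mathbf{P}^{\top}$, note that the damped Cayley map commutes with permutation conjugation, handle the remaining nodewise, pairwise and pooled blocks, and conclude by induction over layers. The paper additionally records a few routine points you omit: score detachment leaves forward values unchanged, edge symmetrization and diagonal removal in the input/output maps commute with conjugation, and dropout is handled by relabeling its masks, which gives equivariance in distribution.
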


\begin{proof}
Assume that nodewise and pairwise maps share parameters across
nodes and ordered node pairs, respectively, LayerNorm acts on
feature channels, and global readouts are permutation invariant.
We first consider dropout disabled.

Fix a layer and omit its index $\ell$. Following the main text,
$\mathbf{S}_h$ denotes the edge-modulated attention-score matrix
for head $h$, and $\mathbf{K}_h$ its skew-symmetric node
generator. Writing
$\mathbf{D}_{\mathbf{m}}=\operatorname{diag}(\mathbf{m})$,
the padded version of \Cref{eq:node_generator} is
\begin{equation}
    \mathbf{K}_h
    =\frac{\varepsilon_h(\mathbf{y})}{2\sqrt{N}}\,
      \mathbf{D}_{\mathbf{m}}
      (\mathbf{S}_h-\mathbf{S}_h^\top)
      \mathbf{D}_{\mathbf{m}}.
\end{equation}
For convenience, denote the damped generator by
$\mathbf{A}_h:=\mathbf{K}_h-\gamma\mathbf{D}_{\mathbf{m}}$,
where $\gamma=\gamma_0/L$. Its Cayley transform
$\mathbf{R}_{\gamma,h}=\cayley(\mathbf{A}_h)$
is the node-transport matrix. The orthogonal construction
corresponds to $\gamma=0$.

\paragraph{Scores and generators.}
Primes denote quantities computed from the permuted input.
Shared query--key maps and edge modulation apply the same
score function to every ordered node pair. Relabeling the
nodes therefore reorders both score indices:
\begin{equation}
    \mathbf{S}_h'
    =\mathbf{P}\mathbf{S}_h\mathbf{P}^{\top}.
\end{equation}
This identity ensures that the transport matrix constructed
from these scores also transforms consistently under
relabeling\cite{vignac2023digress}. Indeed, the global gate is unchanged, and
\[
    N'=\mathbf{1}^{\top}\mathbf{P}\mathbf{m}=N,
    \qquad
    \mathbf{D}_{\mathbf{P}\mathbf{m}}
    =\mathbf{P}\mathbf{D}_{\mathbf{m}}\mathbf{P}^{\top}.
\]
Using $\mathbf{P}^{\top}\mathbf{P}=\mathbf{I}$ gives
\begin{align}
    \mathbf{K}_h'
    &=
    \frac{\varepsilon_h(\mathbf{y})}{2\sqrt{N}}\,
    \mathbf{P}\mathbf{D}_{\mathbf{m}}
    (\mathbf{S}_h-\mathbf{S}_h^\top)
    \mathbf{D}_{\mathbf{m}}\mathbf{P}^{\top}
    =\mathbf{P}\mathbf{K}_h\mathbf{P}^{\top},
    \\
    \mathbf{A}_h'
    &=\mathbf{K}_h'-\gamma\mathbf{D}_{\mathbf{P}\mathbf{m}}
    =\mathbf{P}\mathbf{A}_h\mathbf{P}^{\top}.
\end{align}
Detachment changes no forward values and preserves these
identities.

\paragraph{Node transport.}
The Cayley transform commutes with permutation conjugation:
\begin{align}
    \mathbf{R}_{\gamma,h}'
    &=
    \left[\mathbf{P}(\mathbf{I}-\mathbf{A}_h/2)
    \mathbf{P}^{\top}\right]^{-1}
    \mathbf{P}(\mathbf{I}+\mathbf{A}_h/2)\mathbf{P}^{\top}
    \nonumber\\
    &=\mathbf{P}\mathbf{R}_{\gamma,h}\mathbf{P}^{\top}.
\end{align}
Recall that $\mathbf{X}_h$ contains the node features of
head $h$, and $\mathbf{C}_h$ mixes its feature channels.
Since $\mathbf{X}_h'=\mathbf{P}\mathbf{X}_h$ and
$\mathbf{C}_h$ is independent of node labels,
\begin{align}
    \mathbf{Z}_h'
    &=\mathbf{R}_{\gamma,h}'\mathbf{X}_h'\mathbf{C}_h
    \nonumber\\
    &=(\mathbf{P}\mathbf{R}_{\gamma,h}\mathbf{P}^{\top})
      (\mathbf{P}\mathbf{X}_h)\mathbf{C}_h
    =\mathbf{P}\mathbf{Z}_h.
\end{align}
Thus permuting the input permutes each transported head
in exactly the same way. Head concatenation and the
channel-mixing matrix $\mathbf{O}$ preserve this identity.
The broadcast bias does also, since
$\mathbf{P}\mathbf{1}=\mathbf{1}$:
\begin{align}
    \widetilde{\mathbf{X}}'
    &=
    \operatorname{Concat}_h(\mathbf{P}\mathbf{Z}_h)\mathbf{O}
    +\mathbf{1}\mathbf{b}(\mathbf{y})^\top
    \nonumber\\
    &=\mathbf{P}\left[
      \operatorname{Concat}_h(\mathbf{Z}_h)\mathbf{O}
      +\mathbf{1}\mathbf{b}(\mathbf{y})^\top\right]
    =\mathbf{P}\widetilde{\mathbf{X}}.
\end{align}
Applying the permuted node mask preserves this equality.

\paragraph{Complete denoiser.}
The edge branch consists of shared pairwise operations,
so its output transforms by the same permutation of both
node indices. Invariant node and edge readouts leave the
updated global representation unchanged. Shared feed-forward
maps, channelwise normalization, residual additions, and
layer scalars preserve these transformation laws.
Each complete layer is therefore equivariant, and induction
extends the result to the entire stack \cite{vignac2023digress}.

Input/output maps, masks, and input-to-output residuals
obey the same laws. Time embeddings are global; edge
symmetrization commutes with permutation conjugation,
as does diagonal removal because
$\mathbf{P}\mathbf{I}\mathbf{P}^{\top}=\mathbf{I}$.
Consequently, for every $\gamma_0\geq0$,
\begin{equation}
    F(\mathbf{P}\cdot\mathbf{s})
    =\mathbf{P}\cdot F(\mathbf{s}),
    \qquad
    \mathbf{s}=(\mathbf{X},\mathbf{E},\mathbf{y},\mathbf{m}).
    \label{eq:app_denoiser_equivariance}
\end{equation}

Finally, with dropout enabled, relabeling its masks
$\boldsymbol{\xi}$ along with the graph gives
\[
    F_{\mathbf{P}\cdot\boldsymbol{\xi}}
      (\mathbf{P}\cdot\mathbf{s})
    =\mathbf{P}\cdot F_{\boldsymbol{\xi}}(\mathbf{s}).
\]
Independent identically distributed dropout masks have
the same law after relabeling, establishing equivariance
in distribution.
\end{proof}

\section{Additional Experimental Details and Results}
\label{app:additional_experiments}
This section provides additional details on the experimental setup and complements the results reported in the main text. For all experiments, we build on the public implementation of DeFoG~\citep{qin2025defog} and reuse its released training and sampling hyperparameters for each dataset (optimizer, learning rate, batch size, number of training epochs, and number of sampling steps), for both DeFoG and \model{} and at every depth $L$. The only differences between the two models are
the architectural changes introduced by \model{} in \Cref{sec:method}.
Furthermore, we first describe the global conditioning mechanism used by SGT, followed by details on the employed baselines.
We then report additional molecular generation metrics and an ablation on the damping parameter $\gamma_0$ to further assess the effect of increasingly dissipative dynamics.

\paragraph{Global Conditioning} Following DeFoG, each layer receives a graph-level vector $y$ recomputed from the noisy graph $G_t$ at every step. It concatenates the normalized node count $n/n_{\max}$, the counts of 3-, 4-, 5- and 6-cycles in $G_t$ (each scaled by $1/10$ and clipped to $[0,1]$), the normalized molecular weight of $G_t$ on the molecular datasets, and the flow time $t\in[0,1]$. No class or property label is used: all runs are unconditional. A 64-dimensional sinusoidal embedding of $t$ is appended, and a two-layer MLP maps the result to $d_y$. In the orthogonal block $y$ enters the node update in exactly two places. It sets the per-head rotation angle, $K^h=\varepsilon^h(y)\,(S^h-S^{h\top})/(2\sqrt{n})$ with $\varepsilon^h(y)$ a learned gate,
and contributing an additive bias in$X\leftarrow R\,X\,C + W_x y$. Because the bias is a constant shift, it leaves the node Jacobian unchanged. The time dependence therefore changes how far each layer rotates, never whether the map is orthogonal. This replaces the multiplicative FiLM gate on $X$ used by the baseline, which could rescale node states. The edge stream keeps the baseline's FiLM modulation $E\leftarrow W^{a}_e y + (1+W^{m}_e y)\odot E$. The $y$ stream is updated residually from pooled node and edge statistics.

\subsection{Employed baselines}\label{app:baselines}
In our experiments, the performance of our method is compared with various state-of-the-art generative models from the literature. Specifically, we consider:
\begin{itemize}
    \item \emph{Autoregressive and recurrent models}, including
    GraphRNN \citep{you2018graphrnngeneratingrealisticgraphs},
    GRAN \citep{gran},
    BiGG \citep{bigg},
    GraphGen \citep{graphgen},
    and GraphInvent \citep{mercado2021graphinvent}.
    
    \item \emph{Spectral and hierarchical models}, including
    SPECTRE \citep{sbmplanar}
    and HSpectre \citep{hspectre}.
    
    \item \emph{GNN-based diffusion models}, including
    EDGE \citep{edge},
    EDP-GNN \citep{edpgnn},
    BwR (EDP-GNN) \cite{bwr}

    \item \emph{Graph Transformer-based diffusion and flow models}, including
    DiGress \citep{vignac2023digress},
    DisCo \citep{xu2024discretestatecontinuoustimediffusiongraph},
    Cometh \citep{siraudin2024comethcontinuoustimediscretestategraph},
    GBD \citep{gbd},
    CatFlow \citep{eijkelboom2025variationalflowmatchinggraph},
    GGFlow \citep{hou2024improvingmoleculargraphgeneration},
    SimGFM \citep{luo2026simgfm},
    GruM \citep{jo2024graphgenerationdiffusionmixture}.
    and DeFoG \citep{qin2025defog}.
\end{itemize}

\subsection{Additional Metrics}\label{app:additinal_metrics}
We complement the main evaluation with additional molecular metrics
to examine how depth affects distributional agreement, structural
similarity, and diversity.
The extended MOSES evaluation uses $25{,}000$ generated molecules
and $500$ sampling steps, with results computed against both the
standard test set (Test) and the scaffold-split test set (TestSF).
The ZINC250k evaluation uses $10{,}000$ generated molecules and
a single evaluation fold.
Training-data scores provide an empirical reference at the same
sample size.

\paragraph{MOSES.}
Increasing the depth of \model{} from $L=8$ to $L=16$ and $L=32$ improves
all reported metrics.
FCD decreases from $0.793$ to $0.705$ on Test and from $1.1.51$
to $1.101$ on TestSF, with the deepest configuraiton.
These improvements are accompanied by higher nearest-neighbor
similarity (SNN), which increases from $0.587$ to $0.600$ on
Test and from $0.556$ to $0.565$ on TestSF.
Scaffold similarity also increases on both splits, while the
fraction of molecules passing the filters rises from $99.02\%$
to $99.21\%$.
Thus, the improvement in FCD is accompanied by gains in the
additional structural metrics.

Conversely, DeFoG exhibits a different response to depth.
Its FCD worsens from $0.818$ to $1.129$ on Test and from $1.339$
to $1.676$ on TestSF, despite modest improvements in SNN.
Its scaffold similarity increases on Test but decreases on TestSF.
At $L=16$, \model{} achieves lower FCD and higher SNN than DeFoG
on both splits, with the same filter pass rate.
Interestingly, for our model, TestSF scaffold
similarity is nearly identical at this depth ($11.09\%$ versus
$11.10\%$), and DeFoG at $L=8$ achieves the highest value
($12.13\%$).

\paragraph{ZINC250k.}
Increasing the depth of \model{} from $L=8$ to $L=32$ likewise
improves all reported metrics.
FCD decreases from $0.943$ to $0.851$, SNN increases from $0.434$
to $0.452$, and scaffold similarity rises from $60.38\%$ to
$66.37\%$ with 16 layers.
Fragment similarity and internal diversity also increase slightly,
from $0.996$ to $0.998$ and from $0.863$ to $0.864$, respectively.

\begin{table}[ht]
\caption{MOSES benchmark metrics (molsets), test phase on $25{,}000$ generated molecules, 500 sampling steps. FCD, SNN and Scaf are measured against the MOSES test set (Test) and the scaffold test set (TestSF); Filters and Scaf are percentages. \emph{Train data} scores $n$ random training molecules and is the reference for a perfect model at the same $n$ (Scaf/TestSF is $0$ by construction: the scaffold split shares no scaffolds with training). Best model value in bold, second best underlined.}
\label{tab:moses_full}
\centering\small
\begin{tabular}{llccccccc}
\toprule
 & & & \multicolumn{3}{c}{Test} & \multicolumn{3}{c}{TestSF} \\
\cmidrule(lr){4-6}\cmidrule(lr){7-9}
Model & Depth & Filters $\uparrow$ & FCD $\downarrow$ & SNN $\uparrow$ & Scaf $\uparrow$ & FCD $\downarrow$ & SNN $\uparrow$ & Scaf $\uparrow$ \\
\midrule
DeFoG & $L=8$ & \textbf{99.21} & 0.818 & 0.584 & 85.28 & 1.339 & 0.552 & \textbf{12.13} \\
 & $L=16$ & \textbf{99.21} & 1.129 & 0.591 & 86.52 & 1.676 & 0.556 & 11.10 \\
\addlinespace
\model{} & $L=8$ & \underline{99.02} & 0.793 & 0.587 & 85.38 & 1.229 & 0.556 & 10.26 \\
& $L=16$ & \textbf{99.21} & \underline{0.777} & \textbf{0.600} & \underline{86.67} & \underline{1.151} & \textbf{0.567} & 11.09 \\
& $L=32$ & \textbf{99.21} & \textbf{0.705} & \underline{0.597} & \textbf{90.63} & \textbf{1.101} & \underline{0.565} & \underline{11.45} \\
\midrule
\multicolumn{2}{l}{\emph{Train data}} & 100.00 & 0.134 & 0.643 & 88.84 & 0.636 & 0.586 & 0.00 \\
\bottomrule
\end{tabular}
\end{table} \begin{table}[ht]
\caption{ZINC250k, MOSES benchmark metrics (molsets) against the ZINC250k test set, test phase on $n=10^4$ generated molecules, single fold. Scaf is a percentage. \emph{Train data} scores $10^4$ random training molecules, the reference for a perfect model at the same $n$; Scaf and IntDiv can exceed it. ZINC250k has no scaffold split, so only the Test variants exist. Best model value in bold, second best underlined.}
\label{tab:zinc_molsets}
\centering\small
\begin{tabular}{llccccc}
\toprule
Model & Depth & FCD $\downarrow$ & SNN $\uparrow$ & Scaf $\uparrow$ & Frag $\uparrow$ & IntDiv $\uparrow$ \\
\midrule
DeFoG & $L=8$ & 1.43 & 0.433 & 62.17 & 0.995 & 0.862 \\
 & $L=16$ & 1.30 & 0.441 & 61.50 & 0.990 & 0.863 \\
\addlinespace
\model{} & $L=8$ & 0.94& 0.434 & 60.38 & \underline{0.996} & 0.863 \\
 & $L=16$ & \underline{0.86}& \underline{0.442} & \textbf{66.37} & \textbf{0.998} & \underline{0.864} \\
 & $L=32$ & \textbf{0.85}& \textbf{0.452} & \underline{66.13} & \textbf{0.998} & \textbf{0.868} \\
\midrule
\multicolumn{2}{l}{\emph{Train data}} & 0.21& 0.483 & 63.22 & 1.000 & 0.869 \\
\bottomrule
\end{tabular}

\end{table} 
\paragraph{Training cost.}
\Cref{tab:runtime-simple} compares model size and training time at matched depths of our \model{} against the standard GT used in \cite{qin2025defog}, measured on a single GPU of the platform described in \Cref{tab:system-config}.
Across all available comparisons, \model{} uses $11.4$--$15.8\%$ fewer
parameters than the baseline while maintaining comparable or lower
per-epoch training times, with reductions reaching approximately $13\%$
on SBM and $10\%$ on ZINC250k.
On MOSES, the baseline at $L=32$ failed to train effectively due to
vanishing gradients, whereas \model{} remained trainable.
These results complement the generation experiments: the proposed
architecture enables effective training at greater depth without
increasing the measured per-epoch cost at matched depths.

\begin{table}[ht]
\centering
\caption{Hardware and software configuration of the experimental platform.}
\label{tab:system-config}
\small
\begin{tabular}{@{}ll@{}}
\toprule
\textbf{Component} & \textbf{Specification} \\
\midrule
\multicolumn{2}{@{}l}{\textit{Compute node}} \\
Platform            & Dell PowerEdge XE9640 (BIOS 2.11.2) \\
CPU                 & 2\,$\times$ Intel Xeon Platinum 8452Y \\
                    & 36 cores / 72 threads each (144 threads total) \\
Clock               & 0.8--3.2\,GHz \\
Cache               & 3.4\,MiB L1d, 2.3\,MiB L1i, 144\,MiB L2, 135\,MiB L3 \\
NUMA                & 2 nodes \\
System memory       & 1\,TiB DDR5 \\
\midrule
\multicolumn{2}{@{}l}{\textit{Accelerators}} \\
GPU                 & 4\,$\times$ NVIDIA H100 SXM5, 80\,GB HBM3 each \\
Compute capability  & 9.0 (Hopper) \\
TDP                 & 700\,W per GPU \\
Interconnect        & NVLink 4.0, all-to-all (18 links $\times$ 26.6\,GB/s, \\
                    & 900\,GB/s aggregate bidirectional per GPU) \\
\midrule
\multicolumn{2}{@{}l}{\textit{Storage}} \\
System volume       & 447\,GB Dell BOSS-N1 NVMe (ext4, LVM) \\
Data volume         & 4\,$\times$ 1.92\,TB Dell CM7 U.2 NVMe, \\
                    & software RAID, 5.3\,TB XFS \\
\midrule
\multicolumn{2}{@{}l}{\textit{Software}} \\
OS                  & Ubuntu 24.04.5 LTS, kernel 6.8.0-139 (x86\_64) \\
NVIDIA driver       & 580.173.02 (CUDA 13.0 runtime) \\
CUDA toolkit        & 12.0 (nvcc V12.0.140) \\
Compiler            & GCC 13.3.0 \\
Python              & 3.12.3 \\
Deep learning stack & PyTorch 2.11.0, PyTorch Lightning 2.6.1, \\
                    & TorchMetrics 1.9.0, PyTorch Geometric 2.7.0 \\
Numerics            & NumPy 1.26.4, SciPy 1.17.1 \\
\bottomrule
\end{tabular}
\end{table}
 \begin{table}[ht]
\centering
\caption{Model size and training cost. $\Delta$ is the \model{}'s parameter count relative to the baseline (\ie the standard GT used in \cite{qin2025defog}) at the same depth. Time per epoch is the median wall-clock of the training-only epochs (no validation or sampling) logged during each campaign run, excluding the first epoch of every process; times are measured on a single GPU of the platform described in \Cref{tab:system-config}.\label{tab:runtime-simple}}
\small
\begin{tabular}{llrrrrr}
\toprule
 & & \multicolumn{3}{c}{Parameters (M)} & \multicolumn{2}{c}{Time / epoch (s)} \\
\cmidrule(lr){3-5}\cmidrule(lr){6-7}
Dataset & $L$ & Baseline & \model{} & $\Delta$ & Baseline & \model{} \\
\midrule
Planar & 8 & 7.14 & 6.02 & $-15.6\%$ & 0.4 & 0.4 \\
 & 16 & 14.18 & 11.95 & $-15.7\%$ & 0.8 & 0.7 \\
 & 32 & 28.25 & 23.79 & $-15.8\%$ & 1.5 & 1.5 \\
\midrule
SBM & 8 & 7.14 & 6.03 & $-15.6\%$ & 2.9 & 2.6 \\
 & 16 & 14.18 & 11.95 & $-15.7\%$ & 8.3 & 7.2 \\
 & 32 & 28.25 & 23.79 & $-15.8\%$ & 16.5 & 14.5 \\
\midrule
ZINC250k & 8 & 10.93 & 9.68 & $-11.4\%$ & 228 & 219 \\
 & 16 & 21.61 & 19.12 & $-11.5\%$ & 440 & 411 \\
 & 32 & 42.97 & 38.00 & $-11.6\%$ & 1\,329 & 1\,195 \\
\midrule
MOSES & 8 & 10.93 & 9.68 & $-11.4\%$ & 874 & 845 \\
 & 16 & 21.61 & 19.12 & $-11.5\%$ & 1\,660 & 1\,633 \\
 & 32 & --- & 38.00 & --- & --- & 3\,232 \\
\bottomrule
\end{tabular}
\end{table}
 
\subsection{Ablation on the effect of \texorpdfstring{$\gamma_0$}{gamma0}}
\label{app:ablation_gamma}
To isolate the role of the dynamical regime, we vary only the damping $\gamma_0$ at fixed depth $L=32$, keeping architecture and training unchanged. At $\gamma_0=0$ node transport is exactly orthogonal; increasing $\gamma_0$ makes it progressively contractive (\Cref{prop:controlled_dissipation}). \Cref{tab:gamma-sweep-synth} shows that the non-dissipative model is best on both datasets, and that no amount of damping improves on it. On Planar, V.U.N.\ decreases from $100.0\%$ to $95.0\%$ as $\gamma_0$ grows to $4$; on SBM, any damping lowers it from $92.5\%$ to $90.0\%$. Dissipation therefore degrades deep graph generation, which motivates using the non-dissipative regime.
\begin{table}[ht]
\centering
\caption{V.U.N.\ (\%) of \model{} at $L=32$ with damping $\gamma_0$. $\gamma_0=0$ is exactly orthogonal (\ie non-dissipative dynamics), while increasing \(\gamma_0\) induces progressively stronger dissipation. 40 generated graphs.}
\label{tab:gamma-sweep-synth}
\small
\begin{tabular}{lcccc}
\toprule
 & \multicolumn{4}{c}{$\gamma_0$} \\
 & \multicolumn{4}{c}{\footnotesize non-dissipative $\longrightarrow$ dissipative} \\
\cmidrule(lr){2-5}
 & 0 & 0.5 & 1 & 4 \\
\midrule
Planar & \textbf{100.0} & 97.5 & 97.5 & 95.0 \\
SBM & \textbf{92.5} & 90.0 & 90.0 & 90.0 \\
\bottomrule
\end{tabular}
\end{table}
  
\end{document}